\documentclass{article}

\usepackage{wrapfig}
\usepackage[nonatbib, preprint]{neurips_2026}

\usepackage[utf8]{inputenc} 
\usepackage[T1]{fontenc}    
\usepackage{hyperref}       
\usepackage{url}            
\usepackage{booktabs}       
\usepackage{amsfonts}       
\usepackage{nicefrac}       
\usepackage{microtype}      
\usepackage{xcolor}         
\usepackage{amsthm}

\newtheorem{theorem}{Theorem}
\newtheorem{lemma}[theorem]{Lemma}
\newtheorem{corollary}[theorem]{Corollary}
\newtheorem{proposition}[theorem]{Proposition}

\theoremstyle{definition}

\theoremstyle{remark}
\newtheorem{remark}[theorem]{Remark}
\usepackage{amsmath}   
\usepackage{amssymb}   
\usepackage{amsfonts}  
\usepackage{amsthm}    
\usepackage{subcaption} 
\usepackage[acronym]{glossaries}
\makeglossaries
\glsdisablehyper

\newacronym{sysname}{PU-HNO}{Physics-Unrolled Hybrid Neural Operator}
\newacronym{ood}{OOD}{out-of-distribution}
\newacronym{rss}{RSS}{received signal strength}
\newacronym{mc}{MC}{Monte Carlo}
\newacronym{rt}{RT}{ray tracing}
\newacronym{lf}{LF}{low-fidelity}
\newacronym{partialgt}{Partial-GT}{partial ground truth}
\newacronym{hfgt}{HF-GT}{high-fidelity ground truth}
\newacronym{radio}{radio map}{radio map}
\newacronym{sdf}{SDF}{signed distance field}
\newacronym{film}{FiLM}{feature-wise linear modulation}
\newacronym{erm}{ERM}{empirical risk minimization}
\newacronym{fno}{FNO}{Fourier neural operator}
\newacronym{ffno}{F-FNO}{factorized Fourier neural operator}
\newacronym{wno}{WNO}{wavelet neural operator}
\newacronym{gino}{GINO}{geometry-informed neural operator}
\newacronym{pinn}{PINN}{physics-informed neural network}
\newacronym{pino}{PINO}{physics-informed neural operator}
\newacronym{se}{SE}{spectral efficiency}
\newacronym{mcese}{MCESE}{mean cell-edge SE error}
\newacronym{rf}{RF}{radio frequency}
\newacronym{ml}{ML}{machine learning}

\usepackage{mathtools} 

\usepackage{enumitem}  
\usepackage{bm}        
\usepackage{mathrsfs}  

\title{Physics-Unrolled Neural Operator for \\ Wireless Field Modeling}

\begin{document}

\maketitle

\begin{center}
  \begin{tabular}[t]{c}
    \bf Rafid Umayer Murshed \qquad Saif Ur Rahman \qquad Mingyue Tang \qquad Elahe Soltanaghai \\
    Department of Computer Science \\
    University of Illinois Urbana-Champaign \\
    Urbana, IL 61801, USA \\
    \texttt{\{rum3, saifu2, mt55, elahe\}@illinois.edu}
  \end{tabular}
  \vskip 0.3in
\end{center}

\begin{abstract}
Radio maps are essential for wireless decision-making tasks such as access-point placement, coverage planning, and localization, but their fine spatial details are governed by complex propagation effects and are costly to simulate accurately. Machine learning offers a path to high-fidelity radio-map prediction without running expensive high-fidelity simulations for every scene. However, generating high-quality training labels at scale is also difficult: the affordable labels come from finite-ray simulations, which are richer than low-fidelity inputs but carry residual Monte Carlo noise. We address this challenge with \gls{sysname}, a three-stage cascade that predicts high-fidelity indoor radio maps from low-fidelity ray-tracing outputs and scene priors by progressively capturing reflection, diffraction, and scattering effects, rather than treating radio maps as generic images. We prove that, under conditionally unbiased label noise, the model can learn stable propagation structure and outperform its own training labels. Experiments across diverse floorplans show that \gls{sysname} outperforms image-to-image baselines, wireless learning models, and monolithic neural operators across both image-quality and wireless deployment metrics.
\end{abstract}

\section{Introduction}

\begin{wrapfigure}{r}{0.48\textwidth}
  \vspace{-12pt}
  \centering
  \includegraphics[width=\linewidth]{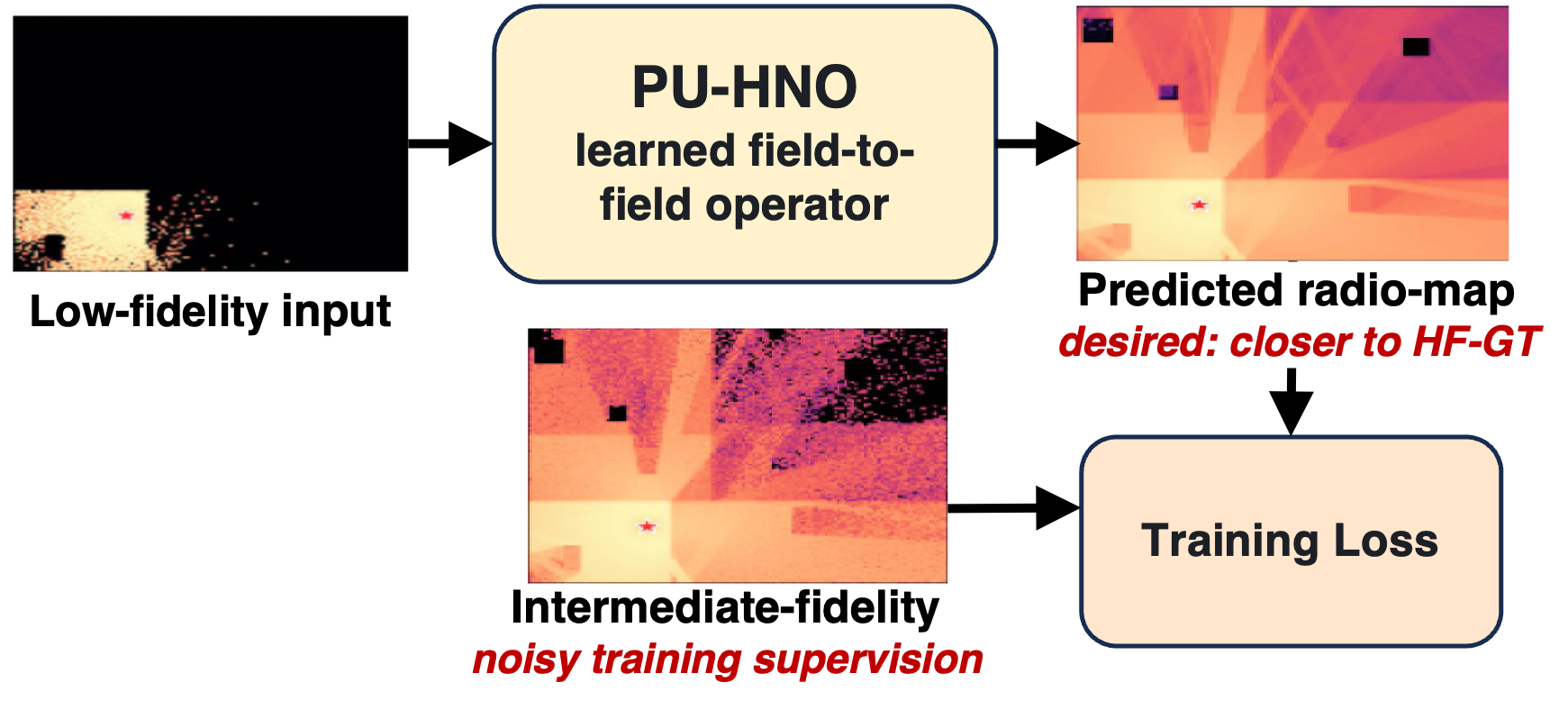}
  \vspace{-16pt}
  \caption{\small PU-HNO predicts high-fidelity radio maps from low-fidelity inputs and intermediate-fidelity supervision.}
  \label{fig:concept}
  \vspace{-10pt}
\end{wrapfigure}


Accurate wireless field modeling aims to predict how electromagnetic signals vary across a physical environment. 
The key challenge is that wireless propagation involves coherent multipath interactions with scene geometry and materials, including reflection, diffraction, and scattering, which can produce sharp spatial variations in \gls{rss} \cite{goldsmith2005wireless, tse2005fundamentals}. As a result, practical tasks such as WiFi access-point placement, coverage planning, localization, and network reliability analysis often require high-fidelity radio maps that capture these local variations across space \cite{lu2024newrf,chen2025rfscape,yang2025gsrf}. However, generating such high-fidelity radio maps requires dense ray sampling or detailed propagation modeling, making it computationally expensive\cite{hoydis2023sionnart, modesto2025accelerating}.

This motivates the central question of this paper: \textbf{can machine learning turn low-cost, noisy radio map simulations into high-fidelity radio map predictions?} A straightforward direction is to treat this as an image-to-image learning problem~\cite{levie2021radiounet,yapar2022radiomaps}, where a model maps a low-fidelity input radio map to a sharper output. However, such models are often optimized for global visual similarity or average pixel-wise accuracy, which can smooth out sharp local variations caused by multipath. This is especially problematic for radio maps because those local variations are not visual artifacts; they reflect underlying propagation effects such as reflection, diffraction, and scattering, and are critical for coverage and planning \cite{goldsmith2005wireless, tse2005fundamentals}.

Another limitation is that learning wireless fields cannot realistically rely on fully high-fidelity labels, since generating them for large training sets is computationally expensive~\cite{modesto2025accelerating}. Instead, the available labels are typically intermediate-fidelity simulations that contain richer spatial structure than the low-fidelity input, but remain noisy and imperfect, as illustrated in Fig.~\ref{fig:concept}. The learning problem is therefore not simply to fit these labels, but to refine coarse fields using the stable structure present in intermediate-fidelity supervision while avoiding simulation noise.

To address these challenges, we propose \gls{sysname}, a physics-unrolled cascade of neural operators for recovering high-fidelity radio maps from low-fidelity ray-tracing outputs. Rather than predicting the full radio map in one step, \gls{sysname} refines the output progressively through multiple operator stages, each focusing on a specific wireless propagation effect: broad reflected coverage, edge-driven diffraction near walls and corners, and fine local scattering fluctuations from objects and multipath. Geometry and scene priors are used to constrain these stages, keeping each refinement grounded in the floorplan, material layout, and transmitter/receiver context. This stage-wise design allows the model to build on the output of the previous operator instead of applying one generic smoothing function to the entire field. This operator formulation further enables field-to-field prediction across different layouts and spatial resolutions, while wireless-physics-aware losses encourage the model to preserve meaningful spatial gradients rather than only matching average pixel values.

\begin{wrapfigure}{r}{0.48\textwidth}
  \vspace{-12pt} 
  \centering
  \includegraphics[width=\linewidth]{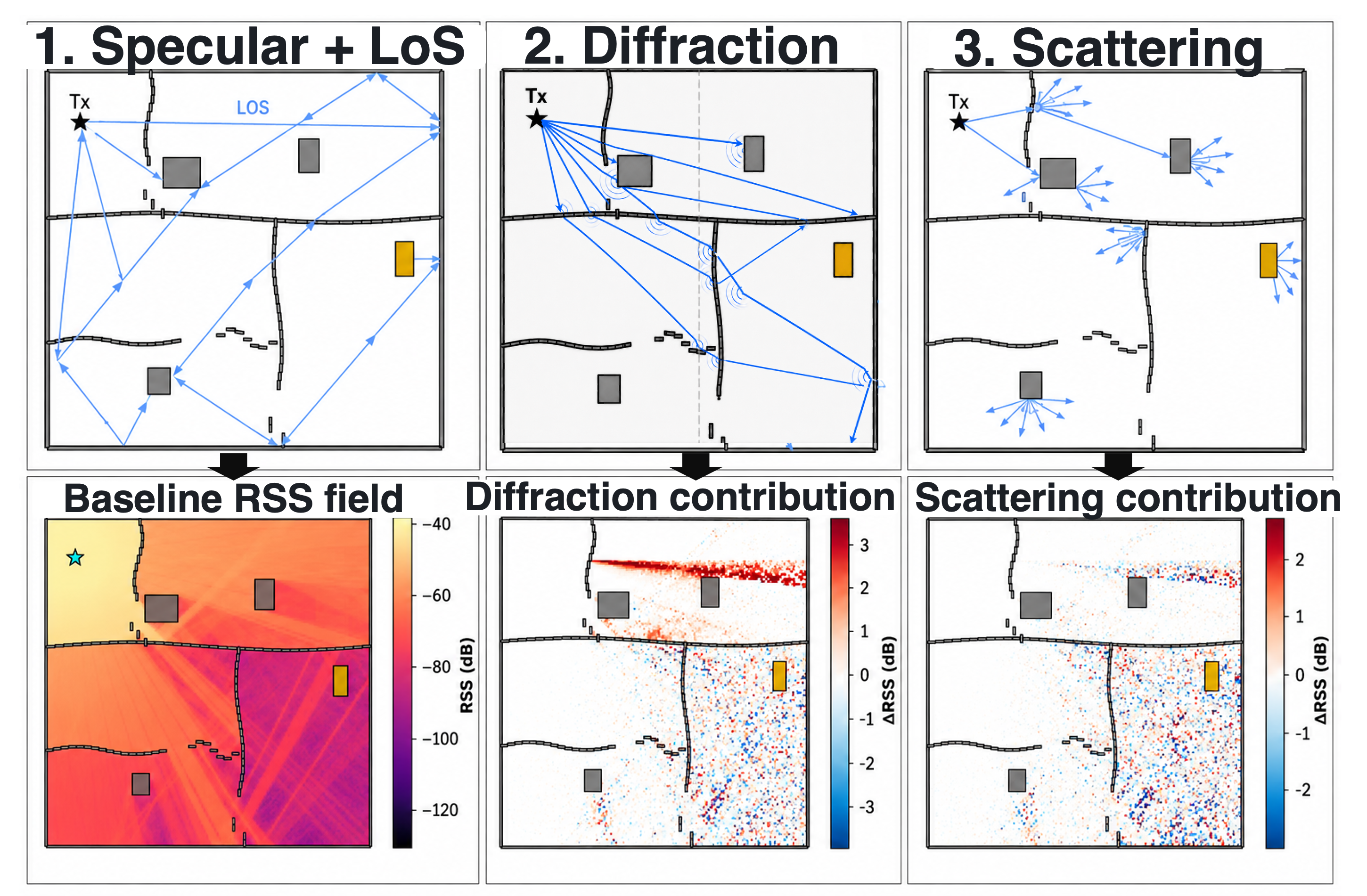}
  \vspace{-16pt} 
  \caption{\small Individual propagation mechanisms jointly shaping the radio map.}
  \label{fig:raytracing-background}
  \vspace{-10pt} 
\end{wrapfigure}

We evaluate \gls{sysname} across diverse indoor scenes using simulated radio maps from NVIDIA SionnaRT \cite{sionna}, comparing it against standard vision architectures, general-purpose neural operators, and wireless-specific baselines. We show that \gls{sysname} can outperform the intermediate-fidelity labels used for training, effectively denoising them without access to high-fidelity reference radio maps during training. Our theoretical analysis identifies a precise condition under which this is possible: the label error must be conditionally unbiased for each scene, meaning that for a fixed floorplan the noise has zero mean and does not systematically bias the field in any direction. The random simulation errors then average out across training samples, while the stable propagation structure shared across scenes remains learnable. This condition holds in our ray-tracing setup, where the intermediate-fidelity labels carry zero-mean Monte Carlo noise.  In summary, the paper makes the following contributions:
\begin{itemize}
    \item[(i)] We introduce \gls{sysname}, a three-stage neural-operator cascade for wireless field refinement that learns from noisy intermediate-fidelity supervision and progressively captures spatial structure associated with reflection, diffraction, and scattering in unseen environments.

    \item[(ii)] We design propagation-aware training objectives that emphasize fine-scale spatial structure in wireless fields, helping preserve details that standard pointwise losses tend to smooth out.

    \item[(iii)] We introduce a deployment-oriented evaluation protocol based on wireless metrics and show that it reveals coverage and planning differences that standard image-quality metrics miss.

    \item[(iv)] We prove a zero-shot denoising theorem. It shows that, with enough training scenes, an operator trained on noisy labels is provably closer to the high-fidelity reference than those labels themselves. This extends classical image-denoising results from single-image settings to operator learning across many physical scenes.

\end{itemize}
\section{Background and Related work }
\label{sec:background}


\noindent\textbf{Wireless Fields and Propagation Mechanisms.}
A wireless field describes how electromagnetic signal strength varies across a physical environment.

The received signal at each location is the superposition of multiple propagation paths, reflecting off of surfaces and objects. A radio map discretizes this field by assigning a received-signal-strength (RSS) value to each location in the deployment area. These maps support decision-making tasks such as access-point placement, coverage planning, localization, and identifying outage-prone regions.

As illustrated in Fig.~\ref{fig:raytracing-background}, different propagation mechanisms leave distinct spatial signatures: Line-of-sight (LoS) and specular paths form broad coverage patterns, diffraction creates sharp changes near shadow boundaries, and scattering introduces fine local fluctuations. High-fidelity radio map simulation must therefore capture multiple superimposed mechanisms, not merely increase spatial resolution. Full-wave electromagnetic solvers provide detailed approximations of Maxwell-equation behavior but are often too expensive for repeated room- or building-scale planning~\cite{altair_feko,ansys_hfss}. In practice, scene-scale wireless design often relies on ray tracing, which is more scalable than full-wave simulation but still trades off cost and fidelity through the ray budget, number of bounces, and modeled propagation mechanisms, with runtime increasing by over $25\times$ in some cases~\cite{hoydis2023sionnart,remcom_wireless_insite,zhu2024toward}.

Many wireless ray-tracing software packages use \gls{mc} ray tracing, in which propagation is approximated by launching finitely many rays, tracing their interactions with scene geometry, and aggregating their contributions into an \gls{rss} field. With fewer rays, the field is cheaper to compute but only partially converged; with more rays, it becomes richer and more stable at higher cost.

\noindent\textbf{Related Work.}
\label{sec:bg-related}
Our work builds on three strands of scientific \gls{ml}. Neural operators, physics-informed networks, and multi-fidelity methods learn maps between structured fields, often when the supervision is coarse or comes from simulations at different cost~\cite{lu2021deeponet,li2021fno,raissi2019pinns,li2022pino,lu2022multifidelity,howard2023multifidelity}. Image denoising work shows that useful predictions can be learned from noisy labels alone, without any clean reference~\cite{lehtinen2018noise2noise,krull2019noise2void,quan2020self2self}. A recent wireless paper uses physics priors to learn the statistical distribution of channel parameters from noisy access-point observations~\cite{bock2025physicsinformed}. We draw on each. None of them targets indoor radio maps, and none separates the three propagation effects --- reflection, diffraction, and scattering --- that produce the distinct local patterns wireless coverage planning depends on.

Learning-based wireless surrogates predict signal behavior without exhaustive site surveys. RadioUNet and the Time of Arrival (ToA) dataset family train a U-Net that maps scene geometry directly to a pathloss map~\cite{levie2021radiounet,yapar2022radiomaps}. More recent work fuses sparse on-site \gls{rf} measurements with visual or geometric priors to fit a richer representation of one scene at a time~\cite{chen2024rfcanvas,chen2025rfscape,yang2025gsrf}. Both routes assume something we avoid: the first needs clean training labels and learns one fixed image-to-image map; the second needs \gls{rf} measurements collected on-site at every new scene. We train once across many scenes on noisy ray-traced labels, and predict on unseen floorplans with no on-site \gls{rf} data.

A closely related line uses continuous per-scene representations --- NeRF-style fields, neural ray tracers, and radiance-field methods that fit one model to a single scene from on-site measurements and predict signal at any location in that scene~\cite{orekondy2023winert,zhao2023nerf2,lu2024newrf,bu2025near}. Standard channel models, geometry-based simulators, and differentiable ray tracers provide the physical scaffold these methods sit on~\cite{3gpp38901,jaeckel2014quadriga,hoydis2023sionnart,hoydis2023learningradioenvironments}. In all of them, ray tracing is either the forward simulator that produces the answer or the target a single neural field is fit against. We use ray tracing the other way around: as cheap, noisy supervision across many scenes. Our model learns a prediction that is closer to a high-fidelity reference than those labels are, and it handles reflection, diffraction, and scattering (Fig.~\ref{fig:raytracing-background}) in three explicit stages rather than smoothing them into one.

\begin{figure}[t]
    \centering
    \includegraphics[width=\linewidth]{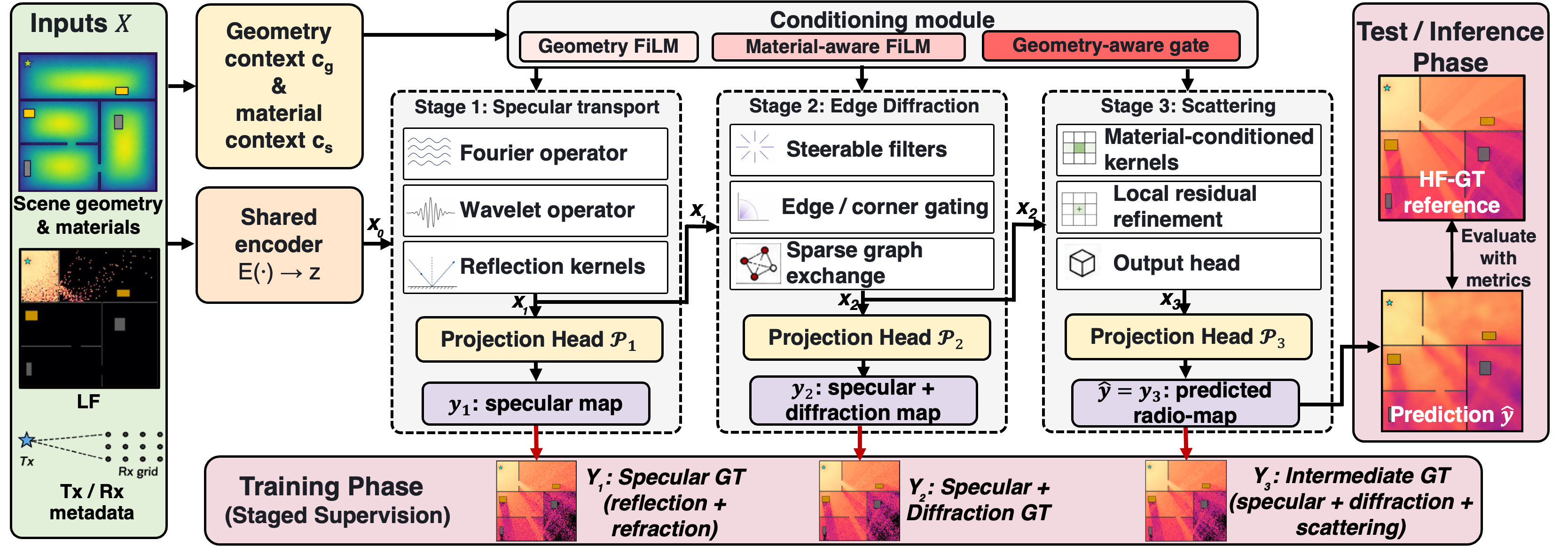}
    \caption{\gls{sysname} estimates high-fidelity radio maps using a three-stage neural-operator architecture that captures different wireless propagation mechanisms.}
    \label{fig:archi}
\end{figure}

\section{Method}
\label{sec:method}

\subsection{Problem Formulation}
\label{sec:problem}
Consider a wireless scene with a fixed transmitter (e.g. a WiFi access point) and receivers distributed over a 2D floorplan $\Omega \subset \mathbb{R}^2$. The corresponding radio map is a 2D matrix of received signal strength values, where each cell $p$ represents a receiver location on the floorplan. We denote by $\mathcal{Y}$ the space of such radio maps. 
For each scene, the model input $X\in\mathcal{X}$ consists of a low-fidelity radio-map (i.e., a coarse simulation-based estimate) and scene priors that describe the floorplan geometry, material information, and coordinate context.
The model is trained using a higher-fidelity but still imperfect radio map as the label, denoted by $Y_{\mathrm{IF\text{-}GT}}\in\mathcal{Y}$, which we call the \emph{Intermediate-Fidelity radio map labels}.
Compared with the low-fidelity input $u$, this label has higher spatial resolution and captures a broader set of wireless interaction mechanisms (e.g. diffraction and scattering). However, it is still an approximate target rather than the final ground truth. For evaluation, we use a separate held-out high-fidelity reference, denoted by $Y_{\mathrm{HF\text{-}GT}}\in\mathcal{Y}$. This \emph{high-fidelity ground-truth radio map reference} represents the desired radio map that the model aims to recover, but it is used only for evaluation and is never seen during training.

The objective is to learn a neural operator $f_\theta:\mathcal{X}\rightarrow\mathcal{Y}$ that maps each input scene representation $X_i$ to a predicted radio map $\hat{Y}_i=f_\theta(X_i)$, where $\theta$ denotes the trainable parameters. For each training scene $i=1,\ldots,n$, the available label is the intermediate-fidelity radio map $Y_{\mathrm{IF\text{-}GT}}^{(i)}$. The model is optimized as
\begin{equation}
\theta^\star
=
\arg\min_{\theta}
\frac{1}{n}
\sum_{i=1}^{n}
\mathcal{L}\big(f_\theta(X_i),Y_{\mathrm{IF\text{-}GT}}^{(i)}\big),
\end{equation}
where $\theta^\star$ denotes the learned parameters and $\mathcal{L}$ is the training loss. Although training uses the intermediate-fidelity ground-truth labels, the goal is for $\hat{Y}_i$ to approach the held-out high-fidelity ground-truth reference.

This is achievable under the assumption that the intermediate-fidelity label is a conditionally unbiased approximation of an ideal radio map for each scene. Let $Y_{\mathrm{oracle}}$ denote this ideal radio map. Then the intermediate-fidelity label can be viewed as
\begin{equation}
\label{eq:unbiasednoise}
Y_{\mathrm{IF\text{-}GT}}^{(i)}
=
Y_{\mathrm{oracle}}^{(i)}
+
\varepsilon_i,
\qquad
\mathbb{E}\!\left[\varepsilon_i \mid X_i\right]=0.
\end{equation}
This means that, for a fixed scene representation $X_i$, the intermediate-fidelity label may contain noise, but the noise does not consistently overestimate or underestimate the ideal radio map. Under this assumption, training on intermediate-fidelity ground-truth labels does not systematically push the model away from the desired mapping. Instead, the zero-mean errors can average out across training examples, allowing the learned operator to approach the ideal mapping. Since $Y_{\mathrm{oracle}}$ is not directly available, we use the held-out high-fidelity reference $Y_{\mathrm{HF\text{-}GT}}$ as its evaluation proxy. We formalize this as a zero-shot denoising theorem (Appendix~\ref{app:theory}).

\subsection{Input Representation}
\label{sec:features}

For each scene, we separate the model input into three groups as
$X = (u, g, c) \in \mathcal{X}.$
where 

{(i) $u\in\mathcal{Y}$ is the \gls{lf} input radio map}, which provides a coarse estimate of the received signal strength. We also include a binary mask that marks which grid cells contain valid low-fidelity estimates, along with local features that describe whether the nearby signal pattern has a clear dominant direction, as expected from ray-like propagation.

{(ii) $g$ denotes geometry/material prior maps}, which are aligned with the input radio-map grid. These maps encode floorplan and propagation-relevant context, including signed distance fields, obstacle and boundary information, transmitter-distance maps, shadow-region indicators, and material-related maps.

{(iii) $c$ denotes coordinate priors}, which provide spatial and transmitter/receiver context, including receiver-grid coordinates, transmitter-location cues, and transmitter/receiver height information. These coordinate features are Fourier-expanded to provide explicit spatial encoding.

The three input groups are passed through a shared encoder to form the initial latent field. They are also injected directly into the stage-specific operators, where they help form the propagation-aware residual updates at each stage.

\subsection{Three-Stage Physics-Unrolled Operators}
\label{sec:method-stages}

Figure~\ref{fig:archi} illustrates our proposed Physics-Unrolled Neural Operator (\gls{sysname}) that refines a low-fidelity radio map into a high-fidelity radio map prediction through a sequence of wireless-aware stages. The key intuition is that wireless propagation effects appear at different levels of spatial complexity: broad coverage patterns are mainly governed by direct and reflected paths, sharper transitions arise near walls, corners, and shadow boundaries due to refraction and diffraction effects, and fine local variations are caused by scattering and multipath interference. Rather than using a single generic operator to recover all of these structures, \gls{sysname} builds the prediction progressively through three cascaded residual operators:
\begin{equation}
\label{eq:unroll}
x_k = x_{k-1} + \tilde{\mathcal{H}}_k(x_{k-1};g,c,s_\star),
\qquad y_k=P_k(x_k)\in\mathcal{Y},\qquad k=1,2,3,
\end{equation}
where $x_k$ is the latent field after stage $k$, $\tilde{\mathcal{H}}k$ is the learned residual operator for that stage, and $P_k$ maps the latent field to the stage-wise radio-map prediction $y_k$. The stages build on each other through the residual update: stage $1$ produces an initial reconstruction of the dominant radio-map structure due to specular reflections, stage $2$ refines the remaining edge- and transition-related effects due to diffraction, and stage $3$ adds finer local corrections due to scattering. Each intermediate output $y_k$ is supervised by a stage-specific label $Y_k$ with matching physical complexity. In this fidelity ladder, $Y_1$ and $Y_2$ provide intermediate supervision, while $Y_3 \equiv Y_{\mathrm{IF\text{-}GT}}$ is the final training label used in the problem formulation, and the final prediction is $\hat{Y}=y_3$. The adaptive residual map $s_\star \in [0,1]^\Omega$, predicted from the first-stage latent field $x_1$, indicates where the current prediction still contains unresolved structure. This allows the later stages to focus on residual propagation effects that were not captured by the previous operator, rather than refining the entire field uniformly.
We next define the details of each stage-specific residual operator.


\paragraph{Stage 1: Specular refinement.}
The first stage captures the dominant, large-scale structure of the radio map. This structure is mainly determined by open propagation paths, line-of-sight regions, and strong reflections from major surfaces such as walls. These effects produce smooth, globally coupled coverage patterns. The Fourier basis represents this broad structure compactly, but pure spectral truncation can blur localized features such as wall-cast shadow boundaries. We therefore combine a global spectral operator $\mathcal{G}$ with a wavelet path $\mathcal{W}$ that preserves localized structure, and a reflection-aware local operator $\mathcal{L}_{\mathrm{refl}}$ whose oriented kernels are aligned with the dominant reflection direction inferred from $g$:
\begin{equation}
\tilde{\mathcal{H}}_1(x;g,c) = \mathcal{G}(x;g,c) + \mathcal{W}(x) + \mathcal{L}_{\mathrm{refl}}(x;g,c).
\end{equation}
After this stage, the model estimates $s_\star$ to flag regions needing further refinement.


\paragraph{Stage 2: Diffraction refinement.}
The second stage focuses on regions affected by diffraction, such as areas near wall edges, corners, and doorways, where the floor plan causes abrupt propagation changes. These regions are difficult to recover from Stage 1 because diffraction depends on both the local obstacle geometry and the relationship among nearby diffraction sites. We therefore combine a direction-selective diffraction operator $\mathcal{D}_{\theta}$ with a graph neural operator $\mathcal{N}_{K}$:
\begin{equation}
\tilde{\mathcal{H}}_2(x;g,c,s_\star) = s_\star \odot \left[\mathcal{D}_{\theta}(x;g,c) + \lambda\,\mathcal{N}_{K}(x;g,c)\right].
\end{equation}
Here, $\mathcal{D}_{\theta}$ applies geometry-guided filters using edge, corner, and shadow-boundary cues from $g$, while $\mathcal{N}_{K}$ is a graph neural operator that exchanges information among the top-$K$ candidate diffraction regions.  Multiplication by $s_\star$ restricts the update to regions where Stage 1 leaves unresolved structure.


\paragraph{Stage 3: Scattering refinement.}
The third stage focuses on regions affected by scattering, which occurs when the propagating signal interacts with sharp corners, small geometric irregularities, or materials that disturb the reflected field. Unlike the broad specular structure captured in Stage 1 or the edge-driven diffraction effects refined in Stage 2, scattering produces short-range, fine-scale fluctuations that depend on both local floor-plan geometry and material properties. To capture these effects, we use local kernels $\kappa_s[g]$ generated from the geometry/material input $g$:
\begin{equation}
\tilde{\mathcal{H}}_3(x;g,c,s_\star) = (\kappa_m[g] \star x) \odot (\alpha_0+\alpha_1 s_\star),
\end{equation}
where $\star$ denotes convolution, $\kappa_s[g]$ is the scattering kernel conditioned on the local geometry/material representation $g$, and $\alpha_0$ and $\alpha_1$ control the baseline and residual-guided strength of the scattering correction. 


\subsection{Training Loss Functions}
\label{sec:training-obj}
The model is trained with a multi-term objective that supervises stage-wise predictions, preserves spatial transitions, and aligns residual updates with propagation-related features:
\begin{equation}
\mathcal{L}_{\mathrm{total}}
=
\sum_{k=1}^{3}\lambda_k \mathcal{L}_{\mathrm{Huber}}^{(k)}
+
\lambda_{\mathrm{sob}}\mathcal{L}_{\mathrm{sob}}
+
\sum_{k=1}^{3}\mu_k \mathcal{L}_{\mathrm{trans}}^{(k)}.
\end{equation}

\textit{Stage-matched Huber} $\mathcal{L}_{\mathrm{Huber}}^{(k)}$. Each prediction $y_k$ is compared with its corresponding label $Y_k$ using a Huber loss, which reduces sensitivity to occasional large errors in the intermediate-fidelity labels. A stage-specific confidence map further emphasizes regions where the corresponding propagation effect is expected to be active.

\textit{Distance-modulated Sobolev} $\mathcal{L}_{\mathrm{sob}}$. To preserve sharp spatial transitions near walls, corners, and shadow boundaries, we penalize gradient mismatch in the final prediction:
\begin{equation}
\mathcal{L}_{\mathrm{sob}}
=
\mathbb{E}
\left[
\left\|
w_{\mathrm{dist}}(X)
\odot
\left(
\nabla y_3 - \nabla Y_3
\right)
\right\|_2^2
\right],
\end{equation}
where $\nabla$ denotes the spatial gradient and $w_{\mathrm{dist}}(X)$ emphasizes geometrically important regions.

\textit{Transport-feature alignment} $\mathcal{L}_{\mathrm{trans}}^{(k)}$. Each residual update is compared with the corresponding label update after both are projected onto directional-energy features:
\begin{equation}
\mathcal{L}_{\mathrm{trans}}^{(k)}
=
\mathbb{E}
\left[
\left\|
\Phi_k(y_k-y_{k-1})
-
\Phi_k(Y_k-Y_{k-1})
\right\|_2^2
\right],
\end{equation}
where $\Phi_k(\cdot)$ extracts local descriptors such as amplitude, coherence, and dominant orientation. This encourages residual corrections to follow propagation structure rather than only pixel-wise error.

The model is trained with a short curriculum that introduces stage-wise supervision first, followed by transport-alignment terms and the weights $\lambda_k$, $\lambda_{\mathrm{sob}}$, and $\mu_k$ control the relative contributions of different loss factors. Implementation details are provided in Appendix~\ref{app:training_protocol}.

\section{Evaluation}
\label{sec:experiments}

\subsection{Setup}
\textbf{Dataset.} We generate the dataset using the NVIDIA Sionna RT ray-tracing simulator~\cite{sionna} over randomly generated floorplans with varying areas, room counts, layouts, and furniture-like objects. For each scene, the low-fidelity input is generated using $10^{4}$ rays with only single-bounce reflections, providing a coarse and incomplete radio map. Models are trained on intermediate-fidelity labels generated with $10^{6}$ rays and full physics, and evaluated against a held-out high-fidelity reference radio maps, generated with the same full-physics setting but $10^{8}$ rays. The \gls{hfgt} reference is never used during training. Test floorplans differ from the training floorplans in room count, layout, and area, allowing us to evaluate generalization to unseen indoor environments.

\textbf{Baseline models.} We compare \gls{sysname} against three groups of baselines: image-to-image regressors, including CNN \cite{lecun2002gradient}, U-Net \cite{ronneberger2015u}, ResNet \cite{he2016resnet}, and ViT \cite{dosovitskiy2021vit}; DL-based wireless channel models, including NERF$^2$\cite{zhao2023nerf2}, RadioUNet~\cite{levie2021radiounet}, GeneRT~\cite{bian2025genert}, and WiGATr~\cite{orekondy2023winert}; and monolithic neural operators, including FNO~\cite{li2020fourier}, TFNO~\cite{kossaifi2023multi}, UNO~\cite{rahman2022u}, SFNO~\cite{bonev2023spherical}, CodaNO~\cite{rahman2024pretraining}, and WNO~\cite{tripura2023wavelet}. We also include intermediate-fidelity label as a baseline, since it represents the supervisory target used during training and provides a reference for whether a model can improve beyond the noisy labels.

\textbf{Evaluation metrics.} We evaluate predictions using both wireless and image-quality metrics. Wireless metrics focus on whether the predicted map preserves coverage failures, fading behavior, or spectral efficiency. Image-quality metrics, including RMSE, MAE, SSIM, and LPIPS, measure pixel-level accuracy and structural/perceptual similarity. More details on the metrics are provided in Appendix~\ref{app:metrics}.


\begin{table*}[t]
\centering
\caption{Quantitative radio map prediction results showing that PU-HNO consistently outperforms all baselines across wireless and computer vision metrics.}
\label{tab:baseline_comparison}
\resizebox{\textwidth}{!}{
\begin{tabular}{ll cccc cccc}
\toprule
 & & \multicolumn{4}{c}{\textbf{Wireless Performance}} & \multicolumn{4}{c}{\textbf{Computer Vision / Generic}} \\
\cmidrule(lr){3-6} \cmidrule(lr){7-10}
\textbf{Model Family} & \textbf{Model} & \textbf{Outage F1} $\uparrow$ & \textbf{Fading Ratio} $\approx 1$ & \textbf{Tail Error (5\%)} $\downarrow$ & \textbf{MCESE} $\downarrow$ & \textbf{RMSE (dB)} $\downarrow$ & \textbf{MAE (dB)} $\downarrow$ & \textbf{SSIM} $\uparrow$ & \textbf{LPIPS} $\downarrow$ \\
\midrule
SIONNA & Training Labels & 0.172 & 1.782 & 1.661 & 1.311 & 14.012 & 5.486 & 0.725 & 0.476 \\
\midrule
Computer Vision & CNN & 0.490 & 0.799 & 1.043 & 1.443 & 4.798 & 3.152 & 0.904 & 0.277 \\
 & UNET & 0.178 & 0.815 & 0.448 & 1.076 & 3.851 & 2.326 & 0.919 & 0.266 \\
 & RESNET & 0.107 & 0.761 & 0.531 & 1.152 & 4.228 & 2.692 & 0.905 & 0.291 \\
 & VIT & 0.017 & 1.495 & 0.145 & 1.637 & 15.310 & 6.188 & 0.652 & 0.475 \\
\midrule
DL-based Channel Models & NERF$^2$ & 0.026 & 0.695 & 1.724 & 3.405 & 5.670 & 3.041 & 0.857 & 0.345 \\
 & RADIOUNET & 0.000 & 0.314 & 9.356 & 9.348 & 10.327 & 5.960 & 0.840 & 0.296 \\
 & GENERT & 0.000 & 0.608 & 1.405 & 2.133 & 5.620 & 3.932 & 0.857 & 0.373 \\
 & WIGATR & 0.002 & 1.517 & 0.559 & 2.078 & 16.305 & 6.807 & 0.643 & 0.487 \\
\midrule
Neural Operators & FNO & 0.025 & 0.750 & 0.690 & 1.605 & 4.045 & 2.362 & 0.903 & 0.313 \\
 & TFNO & 0.018 & 0.715 & 0.805 & 1.848 & 4.125 & 2.280 & 0.912 & 0.280 \\
 & UNO & 0.030 & 0.710 & 0.592 & 1.671 & 4.384 & 2.631 & 0.898 & 0.307 \\
 & SFNO & 0.111 & 0.719 & 0.556 & 1.645 & 4.467 & 2.633 & 0.891 & 0.320 \\
 & CODANO & 0.040 & 0.501 & 2.080 & 4.320 & 7.153 & 4.776 & 0.775 & 0.442 \\
 & WNO & 0.191 & 0.809 & 0.373 & 0.965 & 3.982 & 2.414 & 0.913 & 0.272 \\
\midrule
Proposed Method & \textbf{PU-HNO (Ours)} & \textbf{0.820} & \textbf{1.013} & \textbf{0.100} & \textbf{0.747} & \textbf{3.471} & \textbf{2.086} & \textbf{0.928} & \textbf{0.229} \\
\bottomrule
\end{tabular}
}
\vspace{-.1in}
\end{table*}
\subsection{Results}
\noindent\textbf{Prediction accuracy relative to the high-fidelity reference.} Table~\ref{tab:baseline_comparison} compares all predictions against the held-out \gls{hfgt} reference. Overall, \gls{sysname} gives the most accurate reconstruction across both wireless and image-quality metrics. It reduces tail error to $0.10$, brings the fading ratio close to the ideal value of $1.0$, increases Outage F1 to $0.82$, and reduces RMSE to $3.5$\,dB. These improvements indicate that \gls{sysname} is not only reducing average pixel-wise error, but also more accurately captures coverage-critical regions, such as fading and shadowed areas, where wireless performance can degrade. The first row of the table reports the error of intermediate-fidelity radio maps against high-fidelity references. Since intermediate-fidelity radio maps are the training labels, this row serves as a reference for the noise level in the supervision. The fact that \gls{sysname} outperforms this row shows that the model learns a prediction closer to the high-fidelity reference than the labels it was trained on.


\vspace{.3em}
\noindent\textbf{Impact of physics-based model decomposition.}
Several baselines appear competitive under image-quality metrics. For example, U-Net reaches SSIM $0.919$, TFNO reaches $0.912$, and WNO is within $15\%$ of \gls{sysname} on RMSE. However, their wireless metrics are much weaker: U-Net reaches only $0.18$ Outage F1, TFNO drops to $0.018$, and RadioUNet predicts almost no outage regions. This shows that visual similarity or low average error does not necessarily mean the model preserves the propagation structures that matter for wireless deployment. Image-to-image models tend to recover the average field appearance, while monolithic neural operators capture some global structure but smooth over regime-specific effects. By decomposing the prediction into specular, diffraction, and scattering refinements, \gls{sysname} better preserves these wireless-critical structures. As a downstream example, using radio map prediction for access-point placement on a $10{,}000$\,m$^{2}$ floorplan gives $92$\,Mbps throughput for \gls{sysname} versus $16$\,Mbps for CNN, even though the two models differ by only $2.7\%$ in SSIM; details are provided in Appendix~\ref{app:extended-results}.

\begin{figure}[t]
    \centering
    \includegraphics[width=\linewidth,trim={0.2cm 6.7cm 1.0cm 5.7cm}, clip]{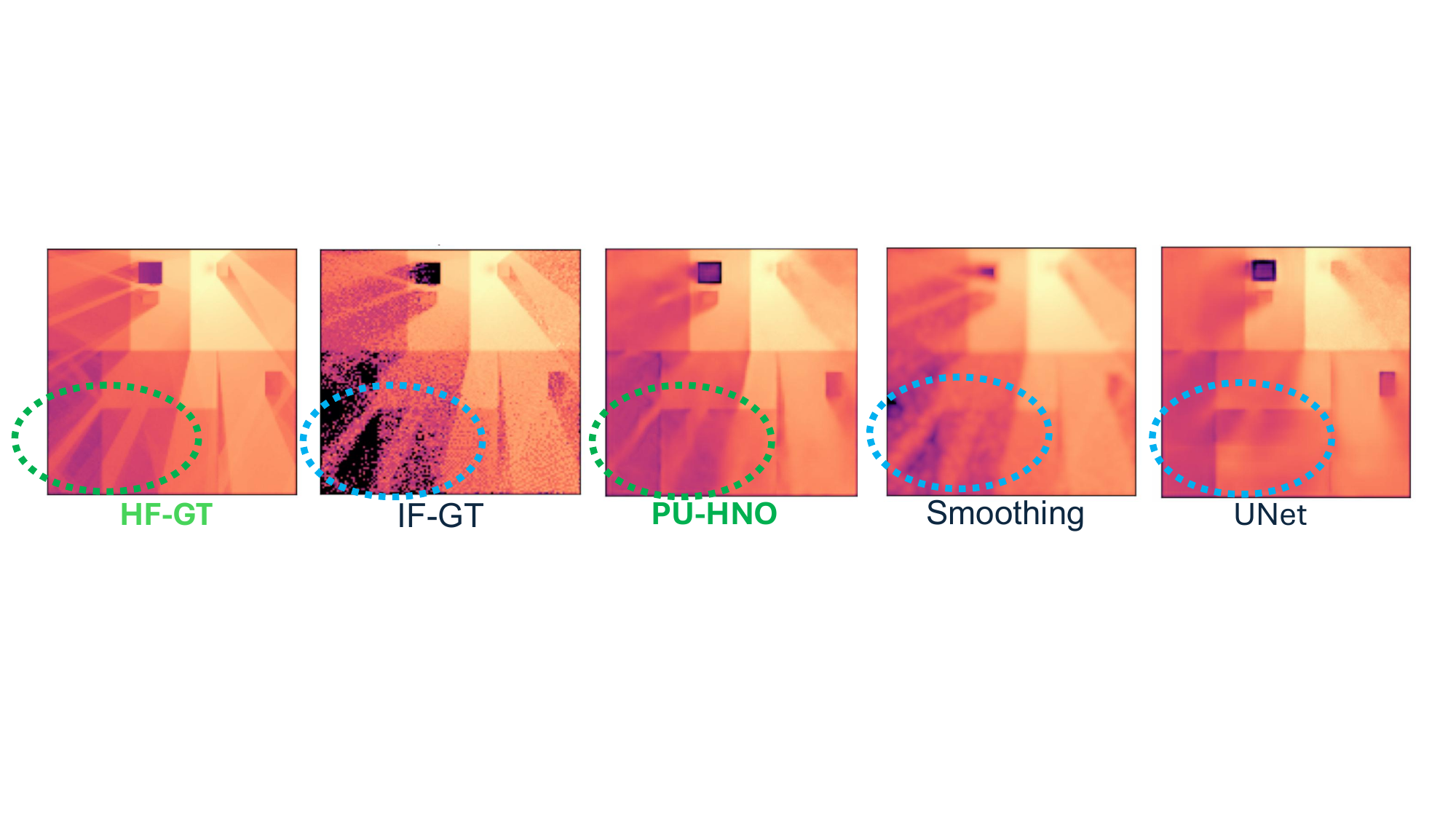}
    \caption{\gls{sysname} recovers fine details from noisy labels and approaches the high-fidelity reference.}
    \vspace{-.2in}
    \label{fig:qualitative}
\end{figure}


\vspace{.3em}
\noindent\textbf{Ablation study of model components.}
Table~\ref{tab:ablation_study} shows that the main design components contribute to wireless reliability. Removing semantic/material conditioning causes Outage F1 to collapse from $0.82$ to $0.001$, indicating that geometry alone is insufficient for predicting coverage failures; the model also needs to know how different materials and objects affect propagation. The loss design has a similar effect. Removing the distance-aware gradient loss or using only plain Huber supervision reduces Outage F1 to $0.11$ and $0.13$.
The staged architecture is also complementary: specular-only prediction gives Outage F1 $0.21$, adding diffraction raises it to $0.40$, and adding scattering increases it to $0.82$, with consistent improvements in tail error and fading behavior. Other components, including the curriculum, ray descriptors, and module choices, have smaller but consistent effects.


\vspace{.3em}\noindent\textbf{Qualitative analysis.}
Figure~\ref{fig:qualitative} shows an example radio map comparing the predictions from \gls{sysname} and the baselines. The intermediate-fidelity labels contain some fine propagation signatures, but they are corrupted by Monte Carlo noise and can be sparse in some regions. Image-trained baselines and direct Gaussian smoothing preserve the broad specular backbone, but they smooth out narrow streaks, shadow boundaries, and local fluctuations. These are the same coverage-critical details captured by the wireless metrics. In contrast, \gls{sysname} recovers these structures with a pattern closer to \gls{hfgt}.


\begin{table*}[t]
\centering
\caption{Ablation study showing that PU-HNO benefits from its staged architecture, geometry priors, and propagation-aware losses.}
\label{tab:ablation_study}
\resizebox{\textwidth}{!}{
\begin{tabular}{ll ccccc}
\toprule
\textbf{Category} & \textbf{Variant} & \textbf{Outage F1} $\uparrow$ & \textbf{Fading Ratio} $\approx 1$ & \textbf{Tail Error} $\downarrow$ & \textbf{RMSE (dB)} $\downarrow$ & \textbf{SSIM} $\uparrow$ \\
\midrule
Learning Paradigm & 1A: Single Target (No Curr) & 0.637 & 1.124 & 0.927 & 3.787 & 0.927 \\
 & 1B: Progressive (No Curr) & 0.784 & 0.956 & 0.217 & 3.407 & 0.926 \\
\midrule
Loss Formulation & 2A: w/o Sobolev Gradient & 0.111 & 0.814 & 0.381 & 3.584 & 0.923 \\
 & 2B: w/o Transport (Ray) & 0.740 & 0.910 & 1.273 & 3.512 & 0.916 \\
 & 2C: Unweighted Masked Huber & 0.128 & 0.818 & 0.197 & 3.655 & 0.916 \\
\midrule
Engineered Priors & 3A: w/o Geometric Priors & 0.780 & 0.967 & 0.110 & 3.660 & 0.923 \\
 & 3B: w/o Semantic Priors & 0.001 & 0.731 & 0.959 & 4.107 & 0.919 \\
 & 3C: w/o Ray Descriptors & 0.791 & \textbf{0.997} & 0.179 & 3.656 & 0.927 \\
\midrule
Architecture Scale & 4A: Block 1 Only (Refl) & 0.206 & 0.834 & 0.197 & 3.724 & 0.921 \\
 & 4B: Blocks 1 \& 2 (Refl + Diff) & 0.399 & 0.850 & \textbf{0.096} & 3.603 & 0.927 \\
\midrule
Micro-Architecture & 4C: w/o Graph Corrector & 0.805 & 0.985 & 0.098 & 3.530 & 0.928 \\

 & 4D: w/o Local Wavelet Path & 0.808 & 0.992 & 0.213 & 3.632 & 0.927 \\
\midrule
\textbf{Proposed Full} & \textbf{5: Full PU-HNO (Ours)} & \textbf{0.820} & 1.013 & 0.101 & \textbf{3.471} & \textbf{0.928} \\
\bottomrule
\end{tabular}
}
\end{table*}

\subsection{Sensitivity Analysis}
\label{sec:sensitivity}


\noindent\textbf{Impact of out-of-distribution (OOD) scene shift.}
We split the test set based on whether the room count and clutter count fall inside or outside the training range, producing four groups: ID, Room-OOD, Clutter-OOD, and Both-OOD. Figure~\ref{fig:sensitive} shows that \gls{sysname} achieves the lowest RMSE across all groups, ranging from $3.31$ to $3.89$\,dB, and maintains Outage F1 between $0.74$ and $0.87$. In contrast, the strongest baseline on each split reaches only $0.34$--$0.53$ Outage F1. This indicates that \gls{sysname} generalizes well to unseen layout and clutter, especially in coverage-critical regions. 


\vspace{.3em}
\noindent\textbf{Impact of intermediate-fidelity label ray budget.}
To evaluate how noisy the training labels can be, we vary the ray budget used to generate intermediate-fidelity radio maps from $50$K to $5$M rays, while keeping the \gls{hfgt} evaluation reference fixed at the $10^{8}$-ray setting. As expected, all models degrade when the training labels become noisier, but Figure~\ref{fig:sensitive} shows that \gls{sysname} degrades more gracefully. Even at $50$K rays, \gls{sysname} maintains Outage F1 around $0.47$, while all baselines fall below $0.05$, and its local-fading ratio remains close to the desired value of $1.0$. This result shows that, as long as the intermediate-fidelity labels retain coherent propagation patterns, the physics-aligned stages can extract those patterns while suppressing much of the noise.


\vspace{.3em}
\noindent\textbf{Impact of training label noise structure.}
To evaluate the impact of label-noise type, we replace the original intermediate-fidelity labels with synthetic labels created by adding controlled noise to the \gls{hfgt} maps. The models are then evaluated against the uncorrupted \gls{hfgt} reference; 
In summary, \gls{sysname} is robust to zero-mean noise types, including IID~\cite{bishop2006pattern}, heteroscedastic~\cite{white1980heteroskedasticity}, and spatially correlated noise~\cite{cressie2015statistics}. Heteroscedastic noise slightly improves RMSE from $3.84$ to $3.62$\,dB and Outage F1 from $0.824$ to $0.834$, suggesting a regularization effect. In contrast, geometry- and distance-dependent biased noises cause a clear failure, increasing RMSE to $12.06$\,dB and reducing Outage F1 to $0.359$. This shows that the model can suppress unbiased noise, but not systematic bias in the training labels.

\vspace{.3em}
\noindent\textbf{Theoretical Justification. } These sensitivity results are supported by our zero-shot denoising theorem in Appendix~\ref{app:theory}. The proof shows that, under the conditionally unbiased noise model in Equation~\ref{eq:unbiasednoise}, the expected training loss against the noisy intermediate-fidelity labels has the same minimizer as the loss against the underlying ideal radio map. The zero-mean label noise adds variance to the supervision, but it does not shift the optimal mapping. Therefore, if the operator class is expressive enough and enough training scenes are available, empirical risk minimization can learn the stable propagation function rather than the label noises.



\begin{figure}[t]
    \centering
    \begin{subfigure}{\linewidth}
        \centering
        \includegraphics[height = 3.5 cm, width=0.8\linewidth,trim={0.1cm 3.2cm 0.2cm 3.9cm}, clip]{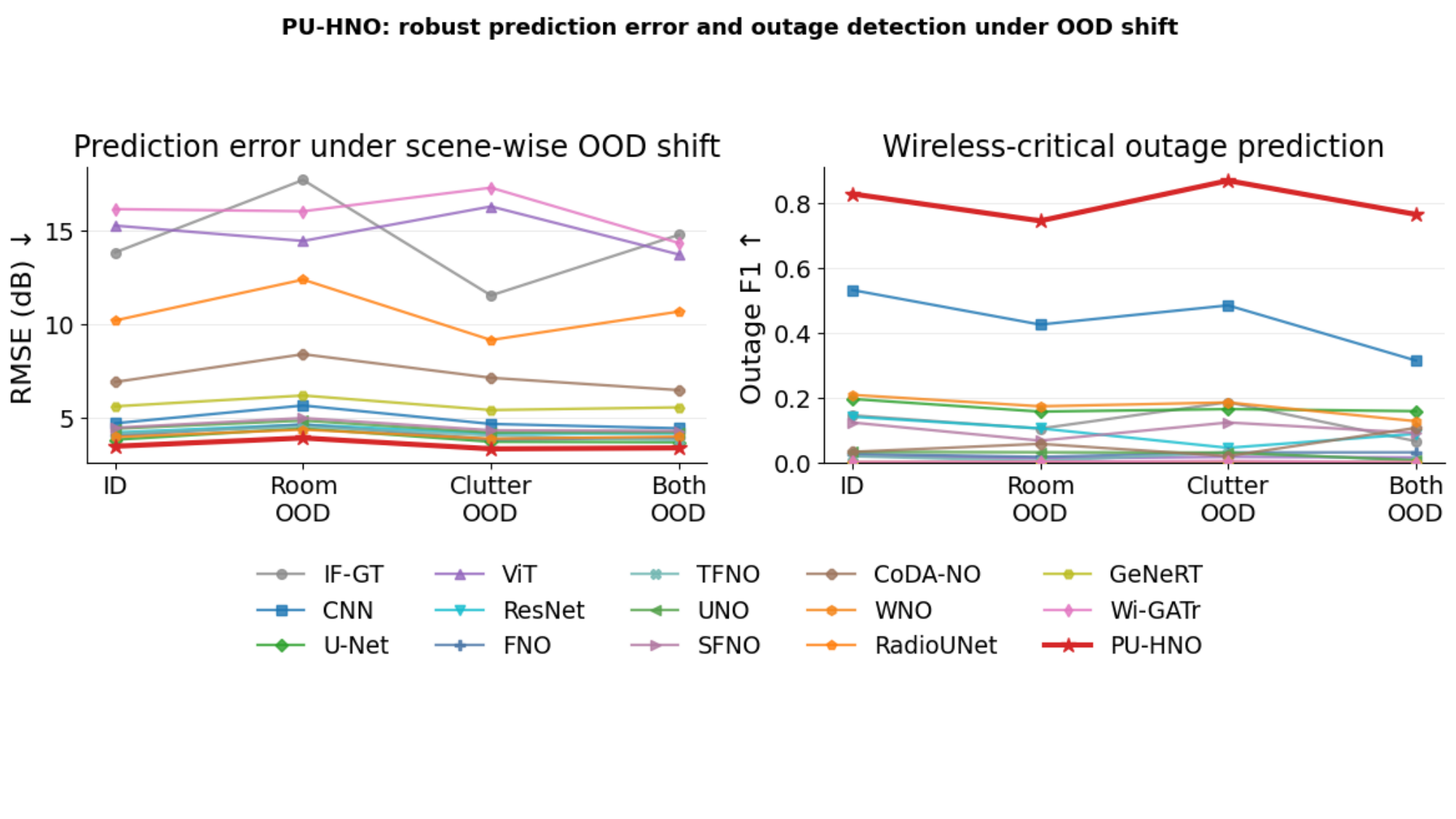}
        \caption{Evaluation across in-distribution (ID) and out-of-distribution (OOD) floorplans demonstrates PU-HNO's superior generalization to unseen room counts and clutter densities}
        \label{fig:sensitive_ood}
    \end{subfigure}
    
    
    \begin{subfigure}{\linewidth}
        \centering
        \includegraphics[height = 3.5 cm, width=0.8\linewidth,trim={0.1cm 4.0cm 0.2cm 3.0cm}, clip]{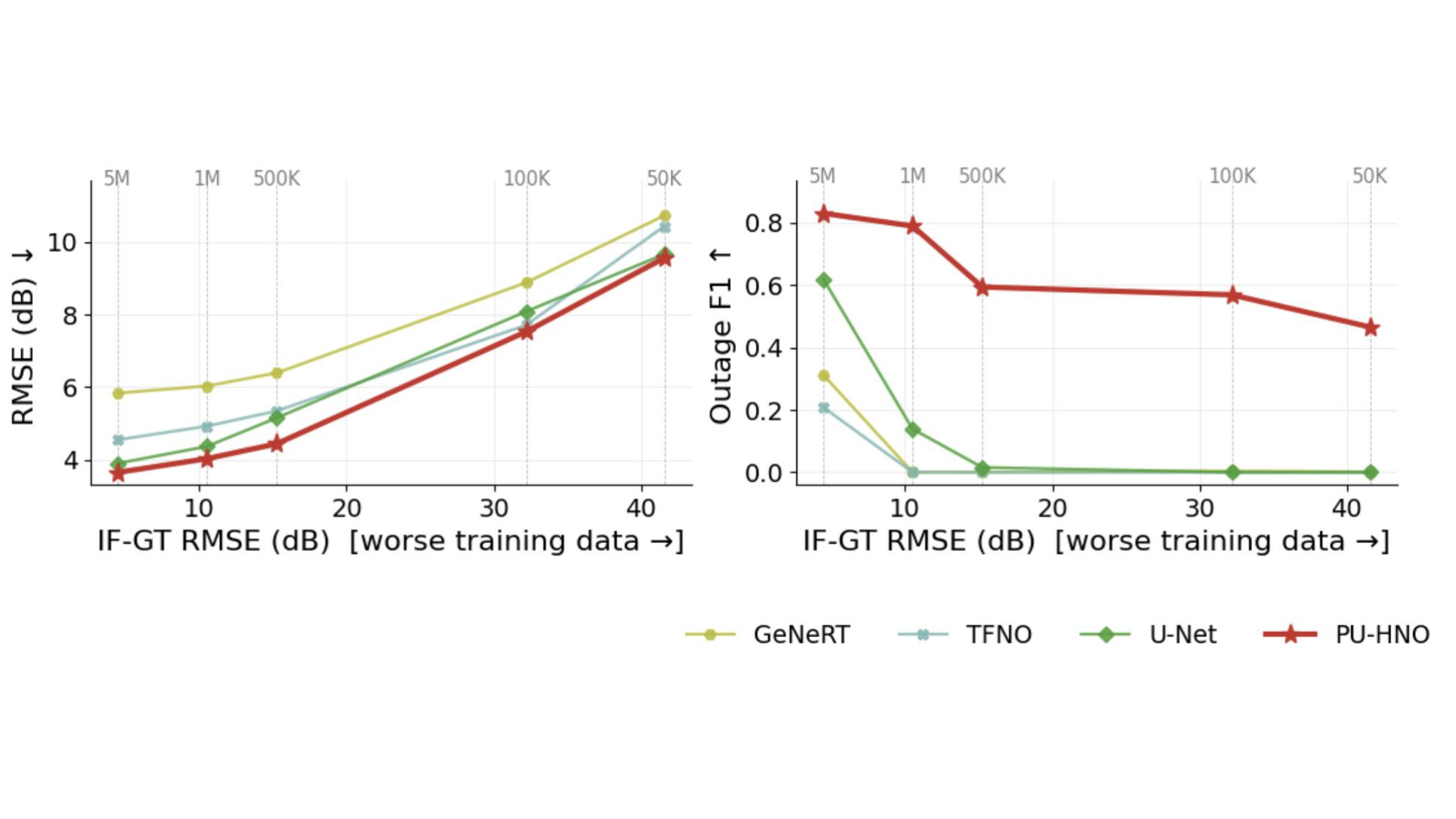}
        \caption{Model performance as a function of the IF-GT training label ray budget. PU-HNO exhibits strong robustness to highly degraded supervision.}
        \label{fig:sensitive_ray}
    \end{subfigure}

    \caption{Model sensitivity to structural distribution shifts and training label noise.}
    \label{fig:sensitive}
\end{figure}

\section{Discussion}
\label{sec:limitations}

\vspace{.3em}
\noindent\textbf{Conclusion. }This work presents \gls{sysname}, a physics-unrolled neural operator for high-fidelity radio map prediction from low-fidelity simulation and scene priors. Instead of treating radio maps as generic images, \gls{sysname} decomposes the prediction into propagation-aware stages that progressively refine specular structure, diffraction effects, and scattering-related details. The results show that this design can learn radio map structure that is closer to the held-out high-fidelity reference than the noisy labels used for training, especially on wireless deployment metrics that capture coverage failures, fading behavior, and spectral-efficiency errors. More broadly, the work demonstrates that imperfect simulation labels can still support high-fidelity wireless field prediction when the model architecture and losses are aligned with the underlying physical structures.

\vspace{.3em}
\noindent\textbf{Limitations and Future Work. }
Despite its effectiveness, \gls{sysname} has several limitations that motivate future work. First, the model depends on accurate geometry, obstacle, and material priors; missing floorplan details or unseen materials can introduce systematic errors in the predicted field. Future work can address this by incorporating continuous material properties, such as permittivity and conductivity, or by learning uncertainty-aware representations of incomplete geometry. Second, \gls{sysname} predicts received signal strength only, and does not yet model phase, channel impulse response, or MIMO channel matrices. Extending the framework to complex-valued channel prediction would make it more useful for downstream tasks such as beamforming, localization, and communication-system design. Third, our evaluation focuses on indoor, single-transmitter settings with fixed antenna assumptions. Future work should study multi-transmitter interference, outdoor or hybrid indoor--outdoor environments, and antenna conditioning, including orientation, polarization, and radiation patterns. Finally, all supervision and evaluation in this work are based on ray tracing. Since the sensitivity analysis shows that systematic bias in training labels can degrade performance, real-world deployment will require calibration with measured data, sim-to-real adaptation, or hybrid training strategies that combine simulation with sparse field measurements.
\bibliographystyle{IEEEtran}
\bibliography{references}
\appendix
\section{Broader Impact and Limitations}
\label{app:impact}

\paragraph{Intended use.}
PU-HNO is a learned surrogate for indoor radio-map prediction at sub-6\,GHz
frequencies, intended for tasks where many candidate configurations must be
evaluated against the same scene---access-point placement, coverage planning,
and what-if analysis during 6G indoor deployment. The intended user is a
network designer or a building-scale digital twin, not an end user of a
wireless device. Training and evaluation in this paper use synthetic scenes
and synthetic ray-traced fields, so the work raises no privacy, consent, or
human-subjects concerns at the dataset level.

\paragraph{Foreseeable benefits.}
Replacing repeated high-fidelity ray tracing with a learned operator reduces
the marginal cost of each candidate configuration by several orders of
magnitude; this makes it tractable to consider many more layouts during
planning, which we expect to translate into better coverage and lower power
in deployed networks. The same surrogate can be used inside outer-loop
optimizers for sensor placement or environmental sensing, which would be
prohibitive with full ray tracing.

\paragraph{Foreseeable risks.}
The model is a propagation surrogate; it has no decision-making role and no
direct path to harmful downstream uses that ray tracing itself does not
already enable. The one risk worth naming is over-trust: a planner who
treats the surrogate as ground truth in a regime it was not trained on (e.g.,
outdoor macrocell, dense metallic clutter, or a frequency outside our
training band) may make confident but wrong deployment choices. We address
this in two ways: (i)~the limitations below state the regime explicitly, and
(ii)~Section~\ref{sec:sensitivity} shows where the model breaks under
controlled perturbation, so that users have a concrete picture of the
operating envelope.

\paragraph{Scope and limitations.}
We list these here, framed by us, so that subsequent appendix sections can
be read against a clear scope:

\begin{itemize}
\item \emph{Single frequency.} All training and evaluation data are at
$5.5$\,GHz. Frequency transfer would require either retraining or
explicit frequency conditioning, which we do not study.
\item \emph{Indoor only.} Scenes are bounded $15\,\text{m}\times 15\,\text{m}$
floorplans with concrete outer shell, plasterboard internal walls, and
furniture-scale clutter. We do not evaluate outdoor or macrocell
propagation.
\item \emph{Single transmitter.} Each sample has one active transmitter; we
do not study multi-Tx interference or coordinated transmission.
\item \emph{Scalar power.} The model predicts received signal strength
(RSS), not phase, channel impulse response, angular spectrum, or MIMO
matrices. Phase-coherent extension is left to future work.
\item \emph{Ray-traced supervision.} Both intermediate-fidelity training labels
and HF-GT evaluation references come from the same ray-tracing
engine. Sim-to-real calibration against measured radio maps is out of
scope.
\end{itemize}

These are deliberate scoping choices for a single-paper contribution, not fundamental limits of the architecture. Each can be relaxed by changes to the training distribution and conditioning interface; the operator structure and the noise-averaging mechanism in Section~\ref{app:theory} are independent of them.

\section{Dataset Generation Pipeline}
\label{app:dataset}

This appendix summarizes the dataset construction details needed to reproduce
the experiments. The main text describes the role of the three fidelity levels:
the low-fidelity input, the intermediate-fidelity training labels, and the
held-out high-fidelity reference. Here we specify how the procedural indoor
scenes, ray-tracing fields, and train/test splits are generated.

\subsection{Scene generation}
\label{app:dataset:scenes}

Each scene is a $15\,\mathrm{m}\times 15\,\mathrm{m}$ indoor floorplan with a
$3\,\mathrm{m}$ ceiling. The outer shell, floor, and ceiling are concrete.
Internal room layouts are generated by recursive binary space partitioning
(BSP): rectangular rooms are repeatedly split along their longer side, with a
minimum room dimension of $3\,\mathrm{m}$ and doorway gaps inserted in internal
walls. Figure~\ref{fig:floorplans} shows representative training and test
floorplans from this generator.

Furniture-scale clutter is added by placing axis-aligned cuboid obstacles inside
rooms through rejection sampling. Obstacles have horizontal half-extents sampled
from $\mathrm{Unif}(0.3,0.8)\,\mathrm{m}$ and heights sampled from
$\mathrm{Unif}(0.8,1.5)\,\mathrm{m}$. We enforce a wall margin and a minimum
separation between obstacles to avoid degenerate overlaps. Materials are assigned
from a small indoor alphabet: concrete for the outer shell, plasterboard for
internal walls, and either concrete or metal for furniture. Material parameters
follow the corresponding ITU material profiles and are also stored as
per-pixel material labels for model conditioning.

\subsection{Propagation simulation: LF input, IF-GT labels, and HF-GT reference}
\label{app:dataset:sim}

All radio maps are generated with Sionna RT~\cite{hoydis2023sionnart} at
$5.5\,\mathrm{GHz}$ using isotropic transmit antennas. The receive plane is
rasterized to a $128\times128$ grid, and all received-signal-strength (RSS)
values are stored in dB. For each transmitter and receive-plane configuration,
we generate three fidelity levels:

\paragraph{Low-fidelity input.}
The low-fidelity field $u$ is generated using $10^{4}$ rays per transmitter
with single-bounce specular propagation. This field is used only as an input
scaffold and is never used as a training target.

\paragraph{Intermediate-fidelity labels.}
The training labels are generated using approximately $10^{6}$ rays per
transmitter. We store three staged labels with increasing propagation complexity:
$Y_1$ contains the specular subset, $Y_2$ adds diffraction, and $Y_3$ adds
diffuse scattering. The final intermediate-fidelity label is
$Y_{\mathrm{IF\text{-}GT}}=Y_3$, while $Y_1$ and $Y_2$ provide stage-matched
supervision for the PU-HNO cascade.

\paragraph{High-fidelity reference.}
The held-out reference is generated with the same propagation mechanisms as
$Y_3$, but with approximately $10^{8}$ rays per transmitter. Thus,
\begin{equation}
\frac{N_{\mathrm{HF}}}{N_{\mathrm{IF}}} \approx 10^{2}.
\label{eq:ray-ratio}
\end{equation}
Under the usual Monte Carlo variance scaling, this makes the HF-GT reference
substantially less noisy than the IF-GT training label. HF-GT is used only for
validation and testing; it is never used to train PU-HNO or any baseline.

\subsection{Splits and out-of-distribution subgroups}
\label{app:dataset:splits}

The training distribution contains scenes with room counts in
$\{4,5,6,7\}$ and furniture counts in $\{2,3,4,5,6\}$. The test distribution
widens both ranges to room counts in $\{3,4,5,6,7,8\}$ and furniture counts in
$\{0,1,\ldots,8\}$. This lets us evaluate both in-distribution performance and
controlled scene-level distribution shift without changing the underlying
floorplan generator.

For the OOD sensitivity analysis in Section~\ref{sec:sensitivity}, the test set
is partitioned into four disjoint groups:
\begin{itemize}
    \item \emph{ID}: room and furniture counts both lie inside the training range.
    \item \emph{Room-OOD}: room count lies outside the training range, while
    furniture count remains inside.
    \item \emph{Clutter-OOD}: furniture count lies outside the training range,
    while room count remains inside.
    \item \emph{Both-OOD}: both room count and furniture count lie outside the
    training range.
\end{itemize}

The released dataset contains the low-fidelity input, staged IF-GT labels,
HF-GT references, geometry maps, material labels, transmitter/receiver context,
and scene identifiers for each sample.

\paragraph{Dataset Link:}
\href{https://drive.google.com/drive/folders/1pC0gqyNScvyUifFPTnnNCmLaEeznYzvh?usp=sharing}{PU-HNO Procedural Indoor RSS Dataset}

\begin{figure}[t]
    \centering
    \includegraphics[width=\linewidth,trim={0.1cm 0.2cm 0.2cm 0.2cm}, clip]{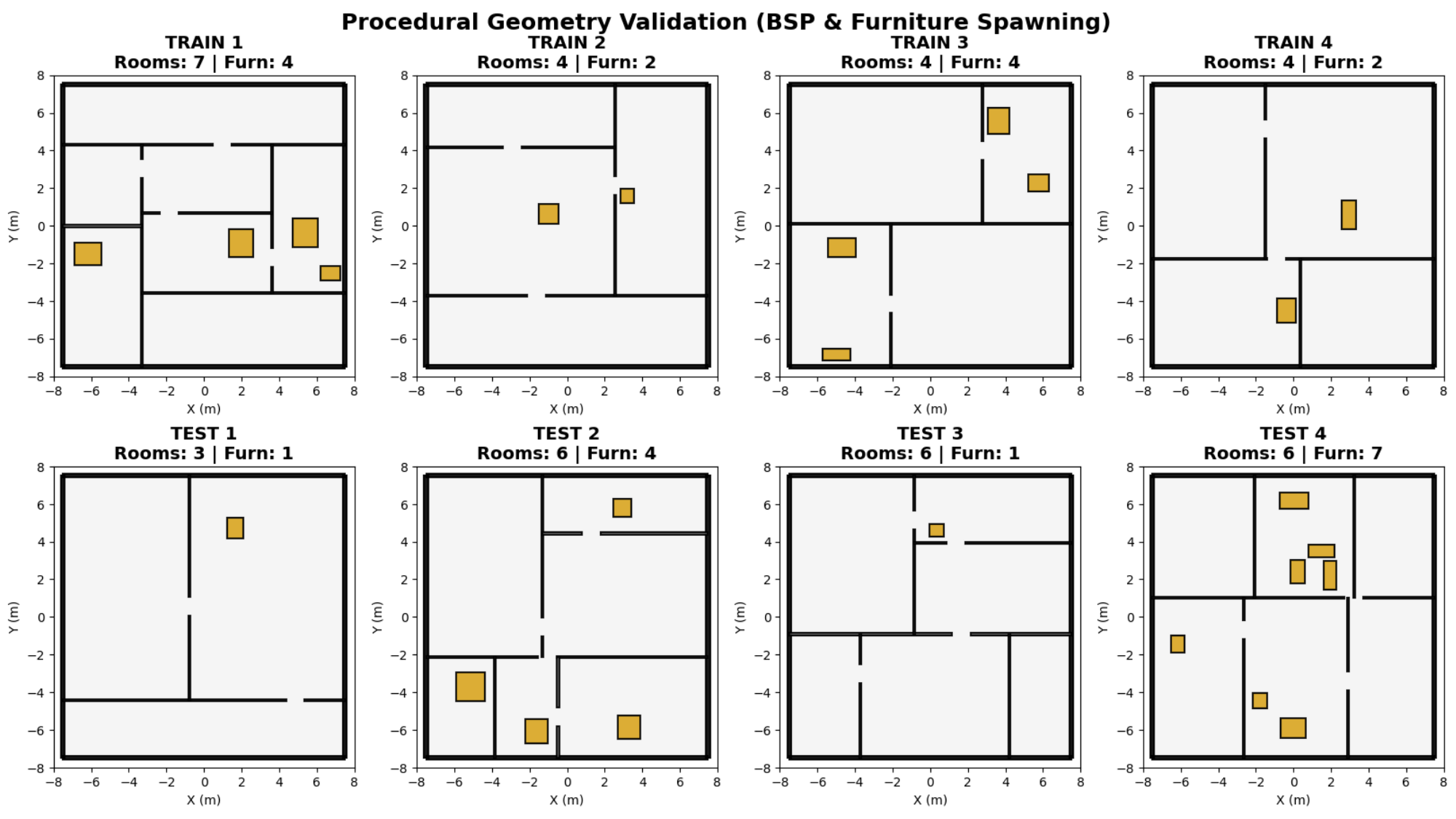}
    \caption{\textbf{Representative training and test floorplans.} Environments are procedurally generated via recursive binary space partitioning (BSP) and randomized furniture spawning. The test set introduces structural distribution shifts by expanding the permitted range of room and obstacle counts beyond the training distribution.}
    \label{fig:floorplans}
\end{figure}

\begin{figure}[t]
    \centering
    \begin{subfigure}{\linewidth}
        \centering
        \includegraphics[width=\linewidth, trim={0.1cm 0.2cm 0.2cm 0.2cm}, clip]{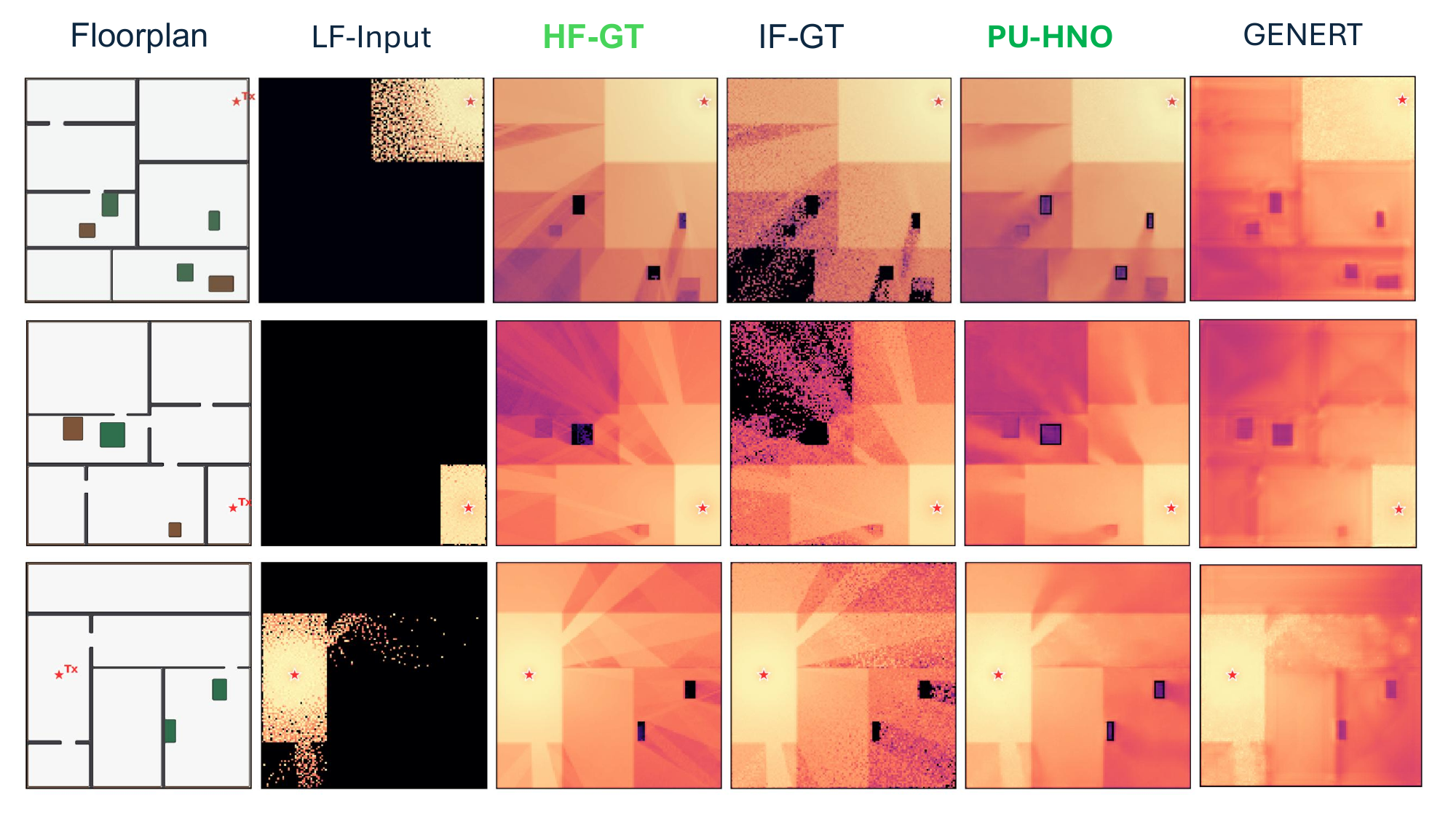}
        \caption{}   
    \end{subfigure}
    \vspace{0.2cm}   
    \begin{subfigure}{\linewidth}
        \centering
        \includegraphics[width=\linewidth, trim={0.1cm 0.2cm 0.2cm 0.2cm}, clip]{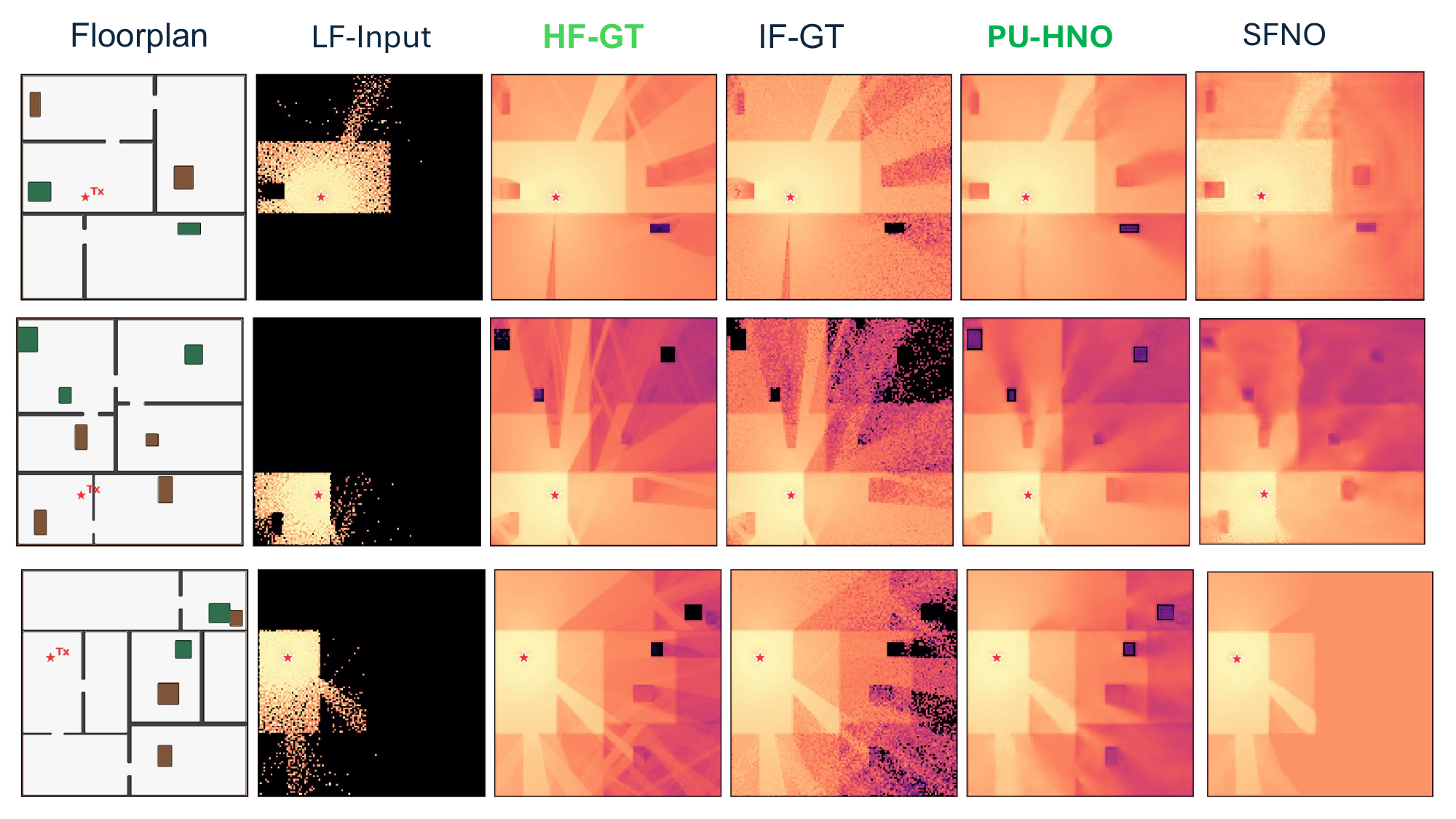}
        \caption{}
    \end{subfigure}
    \caption{\textbf{Qualitative evaluation of radio map reconstruction.} PU-HNO accurately recovers the High-Fidelity Ground Truth (HF-GT) field from sparse LF-Inputs. Baselines such as GeNeRT (a) and SFNO (b) struggle to preserve these sharp, deployment-critical details and exhibit significant over-smoothing.}
    \label{fig:pred_comparison}
\end{figure}

\section{Model Architecture}
\label{app:architecture}

The main text describes PU-HNO as a three-stage residual operator for
specular transport, diffraction, and scattering. This appendix records the
architectural details needed to reproduce the reported model, without
repeating the full motivation from Section~\ref{sec:method}.

\subsection{Overall structure}
\label{app:arch_overview}

PU-HNO applies three residual operators in sequence. Let $x_0$ denote the
encoded input field and $x_k$ the latent state after stage $k$. The model
uses
\begin{equation}
x_k \;=\; x_{k-1} \;+\; \widetilde{H}_k\!\big(x_{k-1};\, g, c, s^\star\big), \qquad y_k \;=\; \mathcal{P}_k(x_k), \qquad k = 1,2,3,
\label{eq:residual_unroll}
\end{equation}
where $\widetilde{H}_k$ is the stage-$k$ residual operator,
$\mathcal{P}_k$ is the stage readout head, and
$s^\star \in [0,1]^\Omega$ is the adaptive sizing field used to restrict
later corrections to regions that still need refinement. The final
prediction is $\widehat{Y}=y_3$.

The residual form keeps each stage close to an identity correction at
initialization and makes the cascade stable: later stages can add localized
updates without rewriting the entire field. The model has roughly $2.5$M
trainable parameters, with most capacity allocated to the first stage because
it must recover the global field structure.

\begin{table}[h]
\centering
\small
\caption{Parameter budget by component. Counts are obtained by instantiating the model used for the reported results and summing over the corresponding submodules. Totals are rounded to the nearest thousand.}
\label{tab:param_summary}
\begin{tabular}{lr}
\toprule
Component & Trainable parameters \\
\midrule
Shared encoder, conditioning, and projection heads & $\sim 187$K \\
Stage 1 (specular)   & $\sim 1.56$M \\
Stage 2 (diffraction)& $\sim 614$K \\
Stage 3 (scattering) & $\sim 147$K \\
\midrule
Total & $\sim 2.50$M \\
\bottomrule
\end{tabular}
\end{table}

\subsection{Encoder and conditioning}
\label{app:encoder}

The shared encoder maps all input channels to a full-resolution latent field
$x_0$. Its inputs are the low-fidelity RSS field, valid-pixel mask,
Fourier-expanded coordinates, Fourier-expanded transmitter-distance features,
and a compact ray-evidence descriptor extracted from the low-fidelity input.
The encoder is a single $3\times 3$ convolution with normalization and a
smooth nonlinearity, producing a latent width of $128$.

Two conditioning streams are reused by all stages. The geometry stream
combines SDF, transmitter distance, line-of-sight cues, wedge/corner cues, and
occupancy. The semantic stream combines free-space/occupied indicators with
per-material labels over \{free space, plasterboard, concrete, metal\}. These
streams are applied through lightweight modulation blocks rather than by
repeated concatenation.

We use three recurring conditioning modules:
\begin{itemize}[leftmargin=1.5em]
    \item \emph{Geometry FiLM}: a bounded Feature-wise Linear Modulation block
    conditioned on geometry context~\cite{perez2018film}.
    \item \emph{Material-aware FiLM}: the same mechanism conditioned on
    semantic/material context, used where the operator should behave
    differently across materials.
    \item \emph{Geometry-aware gate}: a pointwise gate that scales residual
    updates before they are added back to the latent field.
\end{itemize}
The modulation magnitudes and gate biases are initialized conservatively so
that the network begins close to the low-fidelity input and learns corrections
gradually.

\subsection{Stage operators}
\label{app:stage_ops}

\paragraph{Stage 1: specular refinement.}
Stage 1 builds the first usable field estimate from the low-fidelity input. It
contains three paths: a factorized Fourier path for long-range coupling in the
spirit of FNO~\cite{li2021fno}, a local wavelet-style path for localized
detail inspired by WNO~\cite{tripura2023wavelet}, and a reflection-aware local
branch that uses geometry-derived reflection directions to select among
oriented filters. The three outputs are fused into a residual update, and the
stage readout is added to the low-fidelity field so that $y_1$ remains anchored
to the simulator scaffold.

\paragraph{Adaptive sizing field.}
After Stage 1, a small convolutional head predicts
$s^\star\in[0,1]^\Omega$. This field gates the Stage 2 and Stage 3 residual
updates. It is not a hand-coded mask; it is learned from the Stage-1 latent,
geometry features, and ray-evidence descriptor. Its purpose is to let later
stages focus on unresolved regions instead of refining the whole map uniformly.

\paragraph{Stage 2: diffraction refinement.}
Stage 2 targets edge-driven structure near corners, doorways, and shadow
boundaries. It combines a wedge-conditioned directional filtering operator with
a sparse graph corrector over geometrically important pixels. The directional
operator is aligned with geometric diffraction intuition~\cite{keller2016geometrical},
while the graph corrector allows nearby hard regions to exchange information.
Both components are gated by $s^\star$ before being added to the latent state.

\paragraph{Stage 3: scattering refinement.}
Stage 3 models the remaining fine, material-dependent local corrections. It
uses depthwise-separable convolutions whose channel-wise scale and shift are
conditioned on the per-pixel material embedding:
\begin{equation}
h \;\leftarrow\; \mathrm{GN}\!\left(\mathrm{DWConv}(h)\right) \cdot \big(1 + \gamma(m)\big) \;+\; \beta(m),
\label{eq:scatter_layer}
\end{equation}
where $m$ is the material embedding map, and $\gamma(m),\beta(m)$ are
pointwise functions of that map. This is a local FiLM-conditioned convolution
\cite{perez2018film}. The update is again gated by $s^\star$ and by the
geometry-aware gate.

\subsection{Prediction heads and curriculum}
\label{app:heads_curriculum}

The stage readout heads are intentionally small. $\mathcal{P}_1$ and
$\mathcal{P}_2$ are $1\times1$ projections, while $\mathcal{P}_3$ adds a small
geometry-conditioned refinement before the final output. Final layers are
initialized near zero so that the initial prediction is close to the
low-fidelity input.

Training activates the stages progressively. Stage 1 is trained first, Stage 2
is then enabled with its intermediate label $Y_2$, and Stage 3 is finally
enabled with $Y_3=Y_{\mathrm{IF\text{-}GT}}$. The transport-feature auxiliary
losses are introduced with a linear ramp only after the main reconstruction
objective is established. This curriculum follows the physical ordering of the
cascade and is used consistently for all reported PU-HNO runs.
\section{Loss Function and Training Protocol}
\label{app:training_protocol}

The main text gives the high-level training objective. This appendix specifies
the exact loss terms, weights, optimizer, and compute setting used for the
reported PU-HNO results.

\subsection{Confidence-modulated reconstruction loss}
\label{app:conf_weight}

The intermediate-fidelity labels are finite-budget Monte Carlo ray-tracing
outputs, so their reliability varies across space. We therefore use an
input-dependent weight that gives extra emphasis to geometrically important
regions while masking invalid pixels:
\begin{equation}
w_{\text{conf}}(p) \;=\; \big(1 + \mathrm{conf}(p)\cdot \mathrm{geom}(p)\big)\cdot \mathbb{1}_{\text{valid}}(p),
\label{eq:conf_weight}
\end{equation}
with
\begin{align}
\mathrm{conf}(p) &= \frac{1}{1 + \big(d(p)/r_{\text{scale}}\big)^2}, \label{eq:conf_factor}\\
\mathrm{geom}(p) &= \alpha\, \exp\!\big(-\kappa\, |\,\mathrm{SDF}(p)\,|\big) + \beta\, \mathrm{boundary}(p).\label{eq:geom_factor}
\end{align}
Here $d(p)$ is the transmitter distance, $\mathrm{SDF}(p)$ measures distance
to geometry, $\mathrm{boundary}(p)$ marks edge/corner/shadow-boundary regions,
and $\mathbb{1}_{\text{valid}}$ selects valid IF-GT pixels. The weight depends
only on the input and is detached from the optimization graph.

Each stage prediction $y_k$ is supervised by its matching label $Y_k$ using a
weighted Huber loss:
\begin{equation}
\mathcal{L}^{(k)}_{\text{rec}} \;=\; \frac{\sum_{p} w_{\text{conf}}(p)\, \rho_\delta\!\big(y_k(p) - Y_k(p)\big)}{\sum_{p} w_{\text{conf}}(p) + \varepsilon},
\label{eq:huber_loss}
\end{equation}
where $\rho_\delta$ is the Huber penalty. The normalization by total active
weight keeps the loss scale comparable across scenes.

\subsection{Sobolev gradient loss}
\label{app:sobolev_loss}

To discourage over-smoothed predictions, the final prediction is also
supervised in the gradient domain. Let $D_x$ and $D_y$ be fixed Sobel
operators. We use
\begin{equation}
\mathcal{L}_{\text{sob}} \;=\; \frac{\sum_{p} w_{\text{sob}}(p)\, \big[\rho_\delta(D_x y_3 - D_x Y_3) + \rho_\delta(D_y y_3 - D_y Y_3)\big]}{\sum_p w_{\text{sob}}(p) + \varepsilon},
\label{eq:sobolev_loss}
\end{equation}
with
\begin{equation}
w_{\text{sob}}(p) \;=\; \mathrm{conf}(p) \cdot \mathbb{1}_{\text{valid}}(p).
\label{eq:sob_weight}
\end{equation}
We apply this term only at Stage 3, because the earlier stage targets are
intentionally smoother than the final IF-GT field. Appendix~\ref{app:theory}
shows that input-measurable weighted Sobolev losses preserve the same
population target under conditionally unbiased label noise.

\subsection{Transport-feature auxiliary loss}
\label{app:transport_loss}

We additionally supervise the residual updates $y_k-y_{k-1}$ using fixed
directional descriptors. Let $\Phi_k(\cdot)$ summarize local amplitude,
coherence, and dominant orientation from a bank of oriented filters. The
transport-feature loss is
\begin{equation}
\mathcal{L}^{(k)}_{\text{trans}} \;=\; \frac{\sum_p w^{(k)}_{\text{ray}}(p)\, \mathrm{conf}(p)\, \rho_{\delta'}\!\big(\Phi_k(y_k - y_{k-1}) - \Phi_k(Y_k - Y_{k-1})\big)}{\sum_p w^{(k)}_{\text{ray}}(p)\, \mathrm{conf}(p) + \varepsilon}.
\label{eq:transport_loss}
\end{equation}
The descriptor $\Phi_k$ is fixed, not learned. Thus this term does not define
a second output target; it only encourages each residual update to follow the
directional structure of the corresponding label update.

\subsection{Composite objective}
\label{app:total_loss}

The full objective is
\begin{equation}
\mathcal{L} \;=\; \sum_{k=1}^{3} \lambda_k\, \mathcal{L}^{(k)}_{\text{rec}} \;+\; \lambda_{\text{sob}}\, \mathcal{L}_{\text{sob}} \;+\; \tau(t)\sum_{k=1}^{3} \mu_k\, \mathcal{L}^{(k)}_{\text{trans}},
\label{eq:total_loss}
\end{equation}
where $\tau(t)\in[0,1]$ is the transport-loss ramp used during curriculum
training. The same coefficients are used for every reported PU-HNO result.

\begin{table}[h]
\centering
\small
\caption{Loss hyperparameters used for all reported runs. ``Distance scale'' refers to $r_{\text{scale}}$ in Eq.~\eqref{eq:conf_factor}, in physical units relative to the $15\,\text{m}\times 15\,\text{m}$ floorplan.}
\label{tab:loss_hparams}
\begin{tabular}{llr}
\toprule
Term & Symbol & Value \\
\midrule
Stage-1 reconstruction         & $\lambda_1$    & $0.55$ \\
Stage-2 reconstruction         & $\lambda_2$    & $1.00$ \\
Stage-3 reconstruction         & $\lambda_3$    & $1.60$ \\
Sobolev gradient               & $\lambda_{\text{sob}}$ & $0.15$ \\
Stage-1 transport feature      & $\mu_1$        & $0.20$ \\
Stage-2 transport feature      & $\mu_2$        & $0.28$ \\
Stage-3 transport feature      & $\mu_3$        & $0.40$ \\
Reconstruction Huber transition & $\delta$      & $1.0$ \\
Transport Huber transition     & $\delta'$      & $0.05$ \\
Geometry weighting (wall-proximity coefficient) & $\alpha$ & $0.6$ \\
Geometry weighting (boundary coefficient) & $\beta$ & $0.4$ \\
SDF decay rate                 & $\kappa$       & $4.0$ \\
Distance scale                 & $r_{\text{scale}}$ & $\sim 5\,\text{m}$ \\
\bottomrule
\end{tabular}
\end{table}

\subsection{Optimizer, precision, and compute}
\label{app:optimizer_compute}

PU-HNO is trained with AdamW~\cite{loshchilov2017decoupled} using
$\beta_1=0.9$, $\beta_2=0.95$, weight decay $10^{-4}$, and base learning rate
$8\times10^{-4}$. The learning rate uses a three-epoch linear warm-up followed
by cosine decay. Biases, normalization parameters, and one-dimensional
parameters are excluded from weight decay.

Training uses an effective batch size of $64$ scenes, mixed precision with
\texttt{bfloat16}, and gradient clipping at global norm $1.0$. A fixed random
seed is used for Python, NumPy, and PyTorch. The main results use one seed;
robustness to label noise is evaluated separately in Section~\ref{sec:sensitivity}.

All reported PU-HNO runs use a single H200-class NVIDIA GPU. A full run uses
35 epochs over the indoor dataset in Appendix~\ref{app:dataset} and takes
approximately 1.5 GPU-hours. No multi-GPU or multi-node training is required.

%

\section{Baselines and Fair-Comparison Protocol}
\label{app:baselines}

Table~\ref{tab:baseline_comparison} compares PU-HNO against three baseline
families: image-to-image regressors, wireless-specific learning models, and
monolithic neural operators. The purpose of this appendix is not to reintroduce
these well-known architectures, but to document the comparison protocol and the
minimal implementation choices needed for reproducibility.

\subsection{Shared protocol}
\label{app:baseline_protocol}

All baselines use the same train/validation/test split, the same IF-GT
supervision, and the same HF-GT evaluation reference as PU-HNO. Each model is
sized to the same capacity band as PU-HNO: the target is $2.5$M trainable
parameters, and all realized counts lie within $\pm 10\%$ of that target. When a
published architecture has a default size outside this range, we adjust only
width, depth, mode count, rank, or patch size as appropriate, while preserving
the architecture's standard design.

All baselines receive the same information content as PU-HNO: the low-fidelity
field $u$, geometry/material features $g$ including SDF, transmitter distance,
occupancy, and material identity, and coordinate features $c$ including receiver
coordinates and transmitter/receiver elevations. These inputs are formatted
according to each model family's native interface.

Hyperparameters are selected using the same search protocol for every baseline.
We tune learning rate in
$\{2\times10^{-4},6\times10^{-4},1.2\times10^{-3}\}$ and weight decay in
$\{10^{-5},10^{-4}\}$, giving six candidates per model. We use successive
halving~\cite{li2018hyperband}: six candidates are trained briefly, the best
three are promoted, and then the best two are trained longer. The selected
configuration is the one with lowest validation RMSE.

The final training recipe is also shared across baselines: AdamW, batch size
$64$, mixed precision with \texttt{bfloat16}, gradient clipping at global norm
$1.0$, cosine learning-rate decay, and early stopping on validation RMSE. All
reported numbers are produced by one evaluation pipeline that loads each
checkpoint, evaluates on the same HF-GT test split, and computes the metrics in
Appendix~\ref{app:metrics}. Thus, rows in Table~\ref{tab:baseline_comparison}
differ by architecture, not by data access, optimization, or evaluation code.

\subsection{Baseline families}
\label{app:baseline_families}

\paragraph{Image-to-image regressors.}
We include standard dense-prediction architectures: CNN~\cite{lecun2002gradient},
U-Net~\cite{ronneberger2015u}, ResNet~\cite{he2016resnet}, and
ViT~\cite{dosovitskiy2021vit}. These models treat the task as generic
multi-channel image regression from scene features to an RSS map. Each model
predicts a residual correction over the low-fidelity input, which gives a fair
starting point because the low-fidelity map already contains useful propagation
structure.

\paragraph{Wireless deep-learning baselines.}
We include wireless-specific models that are closest to the task setting:
NeRF$^2$~\cite{zhao2023nerf2}, RadioUNet~\cite{levie2021radiounet},
GeneRT~\cite{bian2025genert}, and WiGATr~\cite{orekondy2023winert}. These
models are adapted to our 2D indoor RSS-prediction task by replacing their
native input/output heads where needed while preserving their core modeling
principles. They are trained with the same inputs, parameter budget, and tuning
protocol as the other baselines.

\paragraph{Neural-operator baselines.}
We compare against monolithic neural operators: FNO~\cite{li2020fourier},
TFNO~\cite{kossaifi2023multi}, UNO~\cite{rahman2022u},
SFNO~\cite{bonev2023spherical}, CodaNO~\cite{rahman2024pretraining}, and
WNO~\cite{tripura2023wavelet}. These are the closest methodological baselines
because they also learn field-to-field maps. Unlike PU-HNO, however, they apply
one homogeneous operator family across the whole RSS field rather than separating
specular, diffraction, and scattering refinements.

\subsection{Reference: IF-GT labels}
\label{app:partial_gt_row}

The IF-GT row in Table~\ref{tab:baseline_comparison} evaluates the training
label itself against HF-GT, with no learned model in between. This row measures
the residual error of the supervision signal. It is the reference for the
paper's central claim: a learned model that improves over this row is producing
a prediction closer to HF-GT than the labels used for training.

\subsection{Why rankings differ across metric families}
\label{app:why_rankings_differ}

Table~\ref{tab:baseline_comparison} shows that strong image-quality scores do
not necessarily imply strong wireless performance. RMSE, SSIM, and LPIPS are
dominated by the broad, smooth RSS structure, while wireless deployment depends
heavily on low-RSS regions, local fading, and cell-edge behavior. This is why
several baselines appear competitive under image metrics but fail on Outage F1
or fading ratio. The wireless metrics below are included to expose exactly this
failure mode.

\section{Evaluation Metrics}
\label{app:metrics}

This appendix defines the evaluation protocol used for
Table~\ref{tab:baseline_comparison}. Standard pixel and perceptual metrics are
reported for comparability with dense-regression and image-to-image work. The
wireless metrics are defined more explicitly because they measure deployment
properties that generic image metrics can miss.

\subsection{Reference field and aggregation}
\label{app:metric_aggregation}

All metrics are computed against the held-out HF-GT reference
$\widetilde{f}(X)$ from Appendix~\ref{app:dataset}, never against the IF-GT
training labels. Let $\widehat{Y}$ be a model prediction and
$\widetilde{Y}=\widetilde{f}(X)$ be the HF-GT reference, both in dB on the
$128\times128$ receive-plane grid. Metrics are computed only over valid pixels,
\begin{equation}
\mathcal{M} \;=\; \big\{\, p \in \Omega_{\text{rx}} \;:\; \widetilde{Y}(p) \text{ is finite and } \widetilde{Y}(p) > Y_{\min} + \varepsilon \,\big\},
\label{eq:valid_mask}
\end{equation}
where $Y_{\min}=-150$ dB is the simulator floor. Pixel metrics are averaged over
all valid pixels in the test split. Image-level metrics are computed per scene
and then averaged across scenes. The same aggregation is used for every model.

\subsection{Pixel and perceptual metrics}
\label{app:pixel_metrics}

We report MAE, RMSE, and PSNR as standard pointwise reconstruction metrics. MAE
and RMSE are measured in dB. PSNR uses a fixed dynamic range
$R=170$ dB, corresponding to the interval from the simulator floor
$-150$ dB to the ceiling $20$ dB.

\subsection{Perceptual metrics}
\label{app:perceptual_metrics}

We also report SSIM~\cite{wang2004image}, edge-aware SSIM (ESSIM), DISTS
\cite{ding2020image}, LPIPS~\cite{zhang2018unreasonable}, and GradMean. SSIM,
DISTS, and LPIPS are computed after mapping dB-domain fields to $[0,1]$; DISTS
and LPIPS use three-channel replication and the standard PIQ implementations.
ESSIM applies SSIM only near HF-GT edge pixels obtained from Sobel gradients and
a Canny detector~\cite{canny2009computational}. GradMean is the mean spatial
gradient magnitude of the prediction and is interpreted by closeness to the
HF-GT GradMean, not by monotone increase or decrease.

\subsection{Wireless deployment metrics}
\label{app:wireless_metrics}

Wireless deployment decisions depend on where coverage holes occur, how much
local fading exists, and how accurately the cell-edge tail is predicted. These
properties can be hidden by high SSIM or low RMSE, so we report the following
wireless-specific metrics. Constants are listed in
Table~\ref{tab:wireless_constants}.

\subsubsection{Outage F1}
\label{app:outage_f1}

Outage is defined as RSS below $T_{\text{out}}=-100$ dBm. We threshold both the
prediction and HF-GT reference on valid pixels and compute the binary F1 score:
\begin{equation}
\mathrm{F1}_{\text{out}} \;=\; \frac{2 \cdot \mathrm{P} \cdot \mathrm{R}}{\mathrm{P} + \mathrm{R}}, \qquad \mathrm{P} = \frac{\mathrm{TP}}{\mathrm{TP} + \mathrm{FP}}, \qquad \mathrm{R} = \frac{\mathrm{TP}}{\mathrm{TP} + \mathrm{FN}},
\label{eq:outage_f1}
\end{equation}
where TP, FP, and FN are counted over valid pixels. Outage F1 measures whether
the surrogate identifies coverage holes. This is important because a smoothed
prediction can have good RMSE while failing to predict weak-signal regions.

\subsubsection{Fading ratio (LocalStd9)}
\label{app:fading_ratio}

For each valid pixel, we compute the local RSS standard deviation in a
$9\times9$ window. Let $\sigma_9(\widehat{Y})(p)$ and
$\sigma_9(\widetilde{Y})(p)$ denote this quantity for the prediction and
reference. We evaluate fading only in the high-variation subset
\begin{equation}
\mathcal{H} \;=\; \big\{\, p\in\mathcal{M} \;:\; \sigma_9(\widetilde{Y})(p) \geq Q_{0.85}\big(\sigma_9(\widetilde{Y})|_{\mathcal{M}}\big) \,\big\},
\label{eq:high_var_mask}
\end{equation}
and report
\begin{equation}
\rho_{\text{fade}} \;=\; \frac{\mathrm{mean}_{p\in\mathcal{H}}\, \sigma_9(\widehat{Y})(p)}{\mathrm{mean}_{p\in\mathcal{H}}\, \sigma_9(\widetilde{Y})(p)}.
\label{eq:fading_ratio}
\end{equation}
The ideal value is $1$. Values below $1$ indicate oversmoothing, while values
above $1$ indicate excessive local texture.

\subsubsection{SE 5\% tail error}
\label{app:se_tail}

We convert RSS to spectral efficiency using a standard AWGN approximation. With
bandwidth $B=20$ MHz, noise figure $F=7$ dB, and thermal noise
$-174$ dBm/Hz,
\begin{equation}
N_{\text{dBm}} \;=\; -174 + 10\log_{10}(B) + F.
\label{eq:noise_dbm}
\end{equation}
The spectral efficiency at pixel $p$ is
\begin{equation}
\mathrm{SE}(Y)(p) \;=\; \log_2\!\left(1 + 10^{(Y(p) - N_{\text{dBm}})/10}\right) \quad [\mathrm{b/s/Hz}].
\label{eq:se_def}
\end{equation}
On the high-variation mask $\mathcal{H}$, we report the absolute error of the
5th percentile:
\begin{equation}
\big|\mathrm{SE}_{5\%}^{\text{err}}\big| \;=\; \Big| Q_{0.05}\!\big(\mathrm{SE}(\widehat{Y})|_{\mathcal{H}}\big) - Q_{0.05}\!\big(\mathrm{SE}(\widetilde{Y})|_{\mathcal{H}}\big) \Big| \quad [\mathrm{b/s/Hz}].
\label{eq:se_tail}
\end{equation}
This metric measures whether the model preserves the weakest part of the
spectral-efficiency distribution, which is often where deployment decisions are
made.

\subsubsection{MCESE: mean cell-edge SE error}
\label{app:mcese}

Within $\mathcal{H}$, we define the cell-edge subset as the bottom 10\% of
pixels by HF-GT RSS:
\begin{equation}
\mathcal{C} \;=\; \big\{\, p\in\mathcal{H} \;:\; \widetilde{Y}(p) \leq Q_{0.10}\big(\widetilde{Y}|_{\mathcal{H}}\big) \,\big\}.
\label{eq:cell_edge_mask}
\end{equation}
MCESE is the mean absolute spectral-efficiency error on this subset:
\begin{equation}
\mathrm{MCESE} \;=\; \frac{1}{|\mathcal{C}|} \sum_{p\in\mathcal{C}} \big| \mathrm{SE}(\widehat{Y})(p) - \mathrm{SE}(\widetilde{Y})(p) \big| \quad [\mathrm{b/s/Hz}].
\label{eq:mcese}
\end{equation}
This directly measures prediction error at the weakest high-variation pixels,
where coverage and AP-placement decisions are most sensitive.

\subsection{Summary of constants}
\label{app:metric_constants}

\begin{table}[h]
\centering
\small
\caption{Constants used by the wireless deployment metrics. These values are fixed across all reported experiments and are taken from standard 5--6\,GHz indoor wireless practice.}
\label{tab:wireless_constants}
\begin{tabular}{lll}
\toprule
Symbol & Quantity & Value \\
\midrule
$T_{\text{out}}$ & Outage threshold & $-100$ dBm \\
$B$ & Bandwidth & $20$ MHz \\
$F$ & Receiver noise figure & $7$ dB \\
$N_{\text{dBm}}$ & Thermal noise floor (per Eq.~\eqref{eq:noise_dbm}) & $\approx -94$ dBm \\
$Y_{\min}$ & Simulator floor for valid mask & $-150$ dB \\
$R$ & Dynamic range for PSNR & $170$ dB \\
$w$ & Local-std window size & $9\times 9$ \\
$q_{\mathrm{HV}}$ & High-variance quantile cutoff & $0.85$ \\
$q_{\mathrm{CE}}$ & Cell-edge quantile cutoff & $0.10$ \\
$q_{\mathrm{tail}}$ & SE-tail quantile & $0.05$ \\
\bottomrule
\end{tabular}
\end{table}

\subsection{Why we report all three families}
\label{app:metric_families}

Pixel, perceptual, and wireless metrics answer different questions. Pixel
metrics measure average numerical reconstruction error. Perceptual metrics
measure structural similarity when RSS fields are treated as images. Wireless
metrics measure whether the prediction supports deployment decisions such as
coverage planning, fading-margin estimation, and cell-edge performance
assessment. We therefore report all three, but place the most emphasis on the
wireless metrics when discussing deployment relevance.

\begin{figure}[t]
    \centering
    \begin{subfigure}{\linewidth}
        \centering
        \includegraphics[width=\linewidth, trim={0.1cm 0.2cm 0.2cm 0.2cm}, clip]{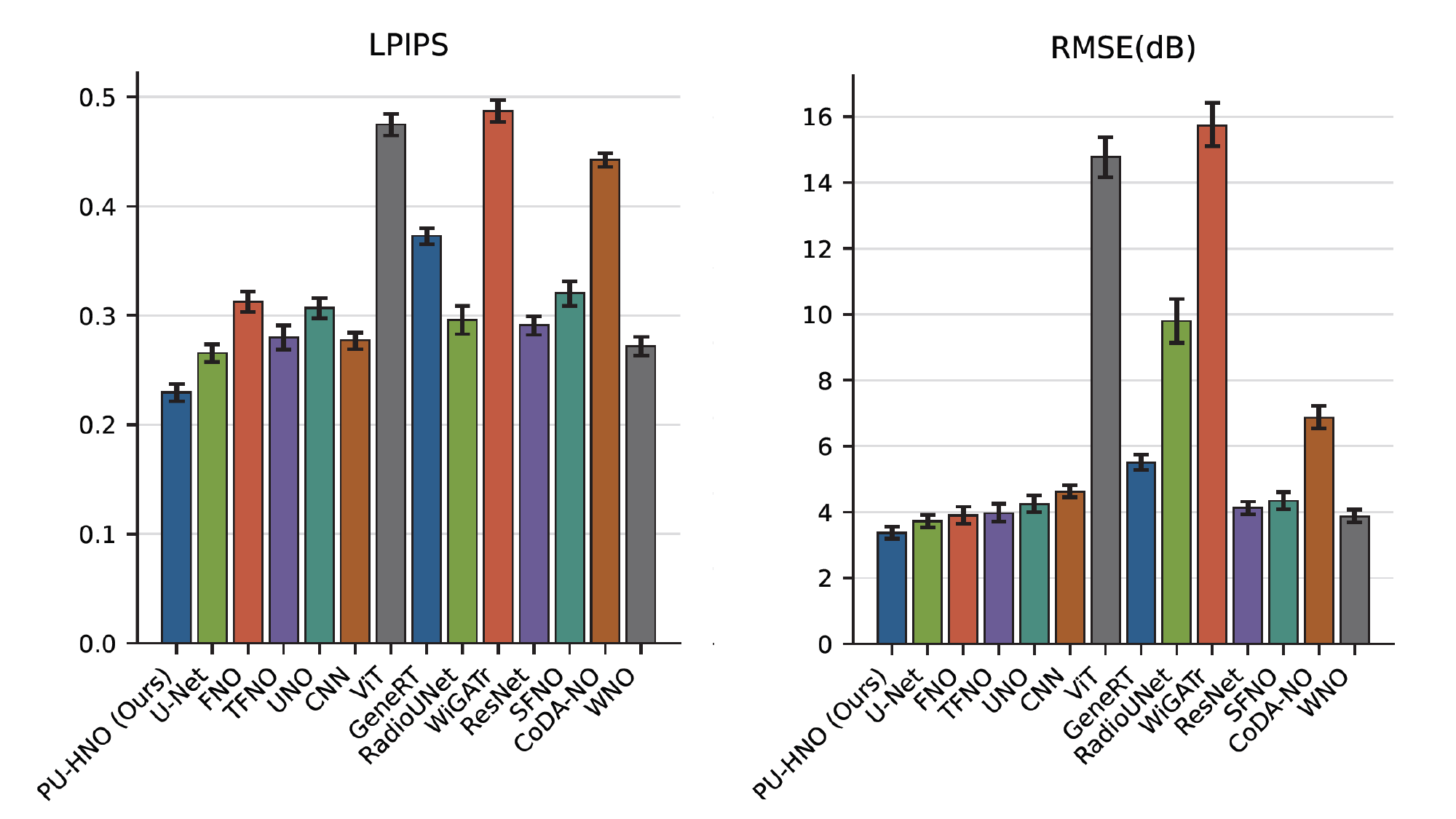}
        \caption{illustrates pixel-level and perceptual metrics (LPIPS, RMSE)}   
    \end{subfigure}
    \vspace{0.2cm}   
    \begin{subfigure}{\linewidth}
        \centering
        \includegraphics[width=\linewidth, trim={2.1cm 1.6cm 1.5cm 1.2cm}, clip]{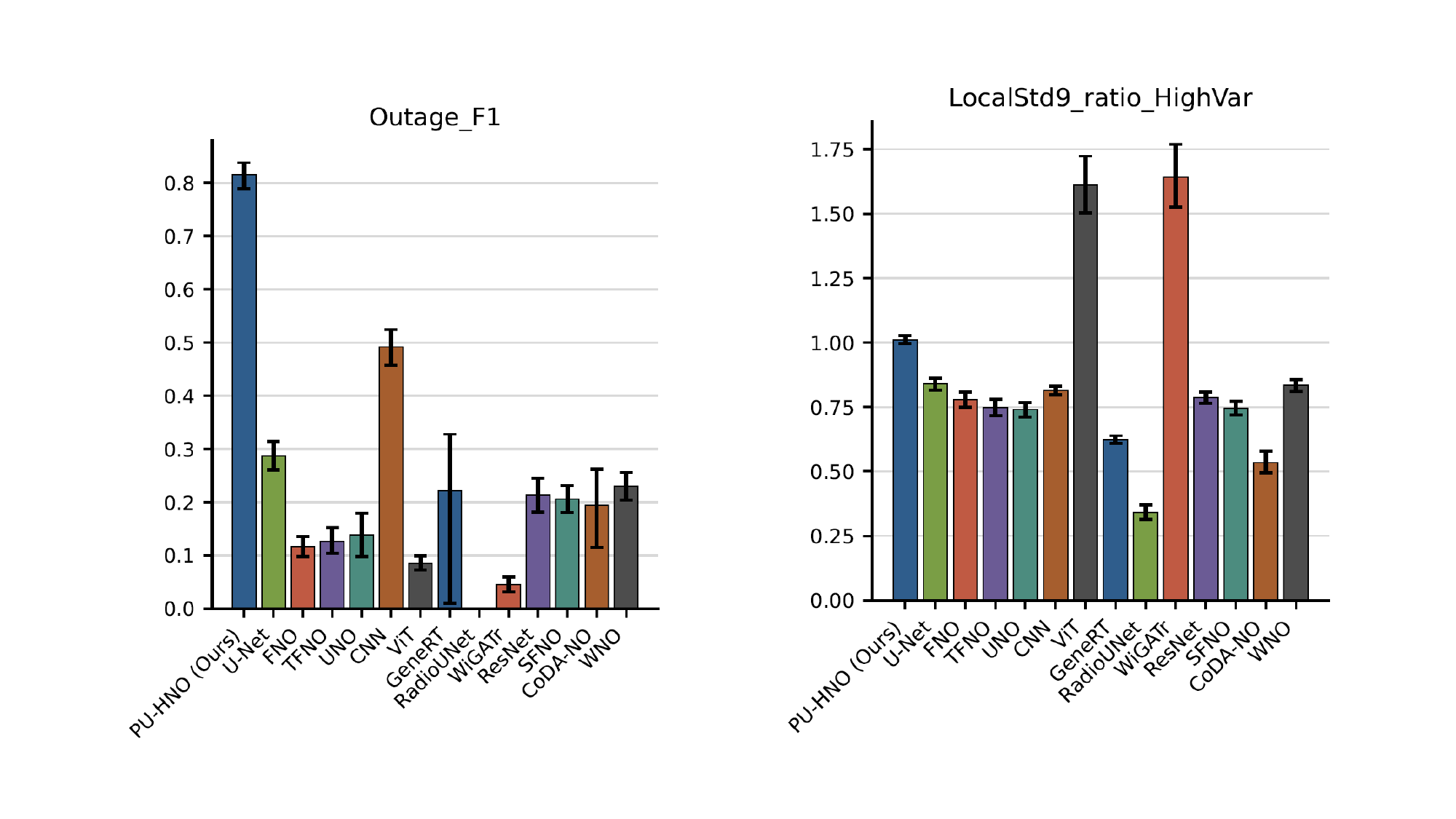}
        \caption{highlights deployment-critical wireless metrics (Outage F1, fading ratio)}
    \end{subfigure}
    \caption{Performance comparison across generic and wireless-specific metrics with 95\% bootstrap confidence interval.}
    \label{fig:error_bars}
\end{figure}

\section{Extended Results}
\label{app:extended-results}

Table~\ref{tab:extended_full_metrics} gives the full metric version of the main
comparison against the held-out HF-GT reference. It extends
Table~\ref{tab:baseline_comparison} by adding outage recall, PSNR, ESSIM,
DISTS, and GradMean. GradMean is included as a global roughness diagnostic: the
HF-GT reference has GradMean $1.58$, so this column should be interpreted by
closeness to $1.58$, not by monotone increase or decrease.

\begin{table}[t]
\centering
\scriptsize
\setlength{\tabcolsep}{2.2pt}
\caption{
Extended quantitative comparison on the HF-GT indoor test set.
R denotes outage recall. 
Fad. is the local fading ratio and should be close to $1$.
SE5 is the absolute 5th-percentile spectral-efficiency tail error in high-variation regions.
Grad is the mean spatial RSS-gradient magnitude. The HF-GT reference has GradMean $1.58$; hence Grad should be interpreted by closeness to $1.58$, not by monotone increase or decrease.
Bold marks the best learned predictor.
}
\label{tab:extended_full_metrics}
\resizebox{\textwidth}{!}{
\begin{tabular}{llccccccccccccc}
\toprule
Family & Model 
& F1$\uparrow$ & R$\uparrow$ 
& Fad.$\approx1$ & SE5$\downarrow$ & MCESE$\downarrow$
& RMSE$\downarrow$ & MAE$\downarrow$ & PSNR$\uparrow$ 
& SSIM$\uparrow$ & ESSIM$\uparrow$ & LPIPS$\downarrow$ & DISTS$\downarrow$ & Grad$\approx1.58$ \\
\midrule
Label & IF-GT 
& 0.172 & 0.991 & 1.782 & 1.661 & 1.311 
& 14.012 & 5.486 & 21.679 & 0.725 & 0.405 & 0.476 & 0.294 & 9.348 \\
\midrule
CV & CNN 
& 0.490 & 0.348 & 0.799 & 1.043 & 1.443 
& 4.798 & 3.152 & 30.989 & 0.904 & 0.548 & 0.277 & 0.202 & 1.380 \\
CV & U-Net 
& 0.178 & 0.100 & 0.815 & 0.448 & 1.076 
& 3.851 & 2.326 & 32.897 & 0.919 & 0.551 & 0.266 & 0.199 & 1.388 \\
CV & ResNet 
& 0.107 & 0.058 & 0.761 & 0.531 & 1.152 
& 4.228 & 2.692 & 32.085 & 0.905 & 0.509 & 0.291 & 0.211 & 1.368 \\
CV & ViT 
& 0.017 & 0.045 & 1.495 & 0.145 & 1.637 
& 15.310 & 6.188 & 20.909 & 0.652 & 0.157 & 0.475 & 0.336 & 10.313 \\
\midrule
Wireless & NeRF$^2$
& 0.026 & 0.015 & 0.696 & 1.725 & 3.406
& 5.671 & 3.041 & 27.291 & 0.857 & 0.399 & 0.345 & 0.203 & 1.935 \\
Wireless & RadioUNet 
& 0.000 & 0.000 & 0.314 & 9.356 & 9.348 
& 10.327 & 5.960 & 24.329 & 0.840 & 0.325 & 0.296 & 0.268 & 0.432 \\
Wireless & GeNeRT 
& 0.000 & 0.000 & 0.608 & 1.405 & 2.133 
& 5.620 & 3.932 & 29.615 & 0.857 & 0.401 & 0.373 & 0.247 & 1.530 \\
Wireless & WiGATr 
& 0.002 & 0.005 & 1.517 & 0.559 & 2.078 
& 16.305 & 6.807 & 20.362 & 0.643 & 0.132 & 0.487 & 0.347 & 10.282 \\
\midrule
NO & FNO 
& 0.025 & 0.013 & 0.750 & 0.690 & 1.605 
& 4.045 & 2.362 & 32.472 & 0.903 & 0.443 & 0.313 & 0.225 & 1.406 \\
NO & TFNO 
& 0.018 & 0.009 & 0.715 & 0.805 & 1.848 
& 4.125 & 2.280 & 32.300 & 0.912 & 0.439 & 0.280 & 0.217 & 1.138 \\
NO & UNO 
& 0.030 & 0.015 & 0.710 & 0.592 & 1.671 
& 4.384 & 2.631 & 31.771 & 0.898 & 0.415 & 0.307 & 0.228 & 1.256 \\
NO & SFNO 
& 0.111 & 0.060 & 0.719 & 0.556 & 1.645 
& 4.467 & 2.633 & 31.608 & 0.891 & 0.384 & 0.320 & 0.234 & 1.228 \\
NO & CoDA-NO 
& 0.040 & 0.029 & 0.501 & 2.080 & 4.320 
& 7.153 & 4.776 & 27.519 & 0.775 & 0.204 & 0.442 & 0.283 & 1.848 \\
NO & WNO 
& 0.191 & 0.108 & 0.809 & 0.373 & 0.965 
& 3.982 & 2.414 & 32.606 & 0.913 & 0.552 & 0.272 & 0.198 & 1.413 \\
\midrule
Ours & PU-HNO 
& \textbf{0.820} & \textbf{0.852} & \textbf{1.013} & \textbf{0.100} & \textbf{0.747} 
& \textbf{3.471} & \textbf{2.086} & \textbf{33.800} & \textbf{0.928} & \textbf{0.576} & \textbf{0.229} & \textbf{0.179} & \textbf{1.582} \\
\bottomrule
\end{tabular}
}
\end{table}

The additional columns reinforce the main-text conclusion without changing it:
PU-HNO is the best learned predictor across the reported metrics, and its
GradMean of $1.582$ closely matches the HF-GT value of $1.58$. By contrast,
IF-GT has GradMean $9.348$, indicating that the training labels contain much
more high-frequency Monte Carlo roughness than the HF-GT reference. This supports
the central claim that PU-HNO does not simply copy the IF-GT label texture.

Figure~\ref{fig:pred_comparison} provides qualitative examples of the same
behavior, and Figure~\ref{fig:error_bars} reports bootstrap uncertainty for the
main metric families.

\subsection{Downstream Case Study: Access-Point Placement}
\label{app:ap-placement}

This case study translates the wireless metrics into a simple AP-placement
interpretation. It is not a separate benchmark; it is an illustrative
calculation using the metrics already reported in Table~\ref{tab:extended_full_metrics}.
We consider a $10{,}000~\mathrm{m}^2$ enterprise indoor floorplan at $5.5$ GHz
and compare PU-HNO with one representative baseline from each family: CNN,
GeNeRT, and WNO.

\paragraph{Coverage holes and extra APs.}
Assume that $10\%$ of the floorplan lies in hard-coverage regions that must be
detected before final AP placement. Thus,
\[
A_{\mathrm{out}} = 0.10 \times 10{,}000 = 1{,}000~\mathrm{m}^2.
\]
Using outage recall, the missed area is
\[
A_{\mathrm{miss}} = (1-\mathrm{Recall})A_{\mathrm{out}}.
\]
Assume one additional enterprise AP can correct approximately
$250~\mathrm{m}^2$ of missed hard-coverage area and that the installed retrofit
cost is \$1,500 per AP:
\[
N_{\mathrm{retrofit}}=\left\lceil A_{\mathrm{miss}}/250 \right\rceil .
\]

\begin{table}[t]
\centering
\scriptsize
\setlength{\tabcolsep}{5pt}
\caption{
Engineering interpretation of outage detection for AP placement on a $10{,}000~\mathrm{m}^2$ enterprise floorplan. 
The assumed true hard-coverage area is $1{,}000~\mathrm{m}^2$.
}
\label{tab:ap_coverage_case}
\begin{tabular}{lccccc}
\toprule
Model & Outage F1 $\uparrow$ & Recall $\uparrow$ & Missed area $(\mathrm{m}^2)$ & Extra APs & Extra cost \\
\midrule
CNN & 0.490 & 0.348 & $652$ & $3$ & \$4,500 \\
GeNeRT & 0.000 & 0.000 & $1{,}000$ & $4$ & \$6,000 \\
WNO & 0.191 & 0.108 & $892$ & $4$ & \$6,000 \\
PU-HNO & \textbf{0.820} & \textbf{0.852} & \textbf{148} & \textbf{1} & \textbf{\$1,500} \\
\bottomrule
\end{tabular}
\end{table}

Under these assumptions, PU-HNO leaves $148~\mathrm{m}^2$ of missed hard-coverage
area, corresponding to one retrofit AP. CNN leaves $652~\mathrm{m}^2$, while
GeNeRT and WNO leave $1{,}000~\mathrm{m}^2$ and $892~\mathrm{m}^2$, respectively.

\paragraph{Achieved cell-edge throughput.}
For an $80$ MHz enterprise WiFi channel, an error of $1$ bit/s/Hz corresponds
to an $80$ Mbps throughput loss. With a $100$ Mbps cell-edge target, we use
\[
R_{\mathrm{achieved}}
=
\max\left(0,\;100 - 80 \times \mathrm{SE5}\right)\ \mathrm{Mbps}.
\]

\begin{table}[t]
\centering
\scriptsize
\setlength{\tabcolsep}{6pt}
\caption{
Engineering interpretation of 5\% tail spectral-efficiency error as achieved cell-edge throughput. 
The target edge throughput is $100$ Mbps and the assumed bandwidth is $80$ MHz.
}
\label{tab:ap_throughput_case}
\begin{tabular}{lccc}
\toprule
Model & SE5 error $\downarrow$ & Throughput loss & Achieved edge throughput $\uparrow$ \\
\midrule
CNN & 1.043 & $83.4$ Mbps & $16.6$ Mbps \\
GeNeRT & 1.405 & $112.4$ Mbps & $0.0$ Mbps \\
WNO & 0.373 & $29.8$ Mbps & $70.2$ Mbps \\
PU-HNO & \textbf{0.100} & \textbf{8.0} Mbps & \textbf{92.0} Mbps \\
\bottomrule
\end{tabular}
\end{table}

PU-HNO achieves an estimated cell-edge throughput of $92.0$ Mbps, compared with
$16.6$ Mbps for CNN, $0.0$ Mbps for GeNeRT, and $70.2$ Mbps for WNO.

\paragraph{Fading and latency risk.}
The fading ratio indicates whether the model preserves local multipath
fluctuation strength. PU-HNO has fading ratio $1.013$, close to the ideal value
of $1$. CNN, GeNeRT, and WNO have fading ratios $0.799$, $0.608$, and $0.809$,
respectively. Thus, these baselines under-estimate fading by roughly
$19$--$39\%$, while PU-HNO over-estimates it by only $1.3\%$. For AP planning,
this means PU-HNO is less likely to make locally unstable regions appear
artificially safe.
\section{Theoretical appendix: proofs for zero-shot denoising}
\label{app:theory}

This appendix provides the formal statement, proof, and supporting results for
the zero-shot denoising claim stated informally in Section~\ref{sec:method},
Eq.~(2). After the roadmap (Section~\ref{app:roadmap}), we introduce setup and
notation (Section~\ref{app:setup}), state the assumptions
(Section~\ref{app:assumptions}), justify conditional unbiasedness of finite-ray
Monte Carlo ray tracing (Section~\ref{app:mcrt}), prove the population
decomposition (Section~\ref{app:population}), and prove
Theorem~\ref{thm:zsd} (Section~\ref{app:mainproof}). We then include two
extensions: a finite-dimensional realizable analogue
(Section~\ref{app:ols}) and a Sobolev target-preservation result supporting
the training objective in Section~\ref{sec:training-obj}
(Section~\ref{app:sobolev}).

\subsection{Roadmap}
\label{app:roadmap}

Theorem~\ref{thm:zsd} formalizes when empirical risk minimization on noisy
IF-GT labels can recover the clean field and eventually become closer to the
high-fidelity reference than the labels themselves. Proposition~\ref{prop:mcrt}
connects the noise model to finite-budget ray tracing, and
Lemma~\ref{lem:population} gives the key risk-decomposition identity.
Proposition~\ref{prop:ols} provides a simple realizable analogue, while
Proposition~\ref{prop:sobolev} and Corollary~\ref{cor:sobolev-weighted} extend
the same argument to the discrete Sobolev and weighted Sobolev losses used in
Section~\ref{sec:training-obj}.

\subsection{Setup and notation}
\label{app:setup}

We consider an indoor wireless scene with one transmitter and many
possible receiver locations on a two-dimensional receive plane
$\Omega \subset \mathbb{R}^2$. The quantity of interest is the received
signal strength (RSS) field,
\[
Y : \Omega \to \mathbb{R},
\]
where $Y(p)$ is the received power, in dB, at receiver location $p \in
\Omega$. We predict this scalar power field only; we do not model phase,
channel impulse responses, angular spectra, or MIMO channel matrices.

The physical difficulty is that the RSS field is not determined only by
distance from the transmitter. Walls, corners, doorways, furniture, and
materials create multiple propagation paths. Some paths travel directly
or reflect from large surfaces, producing broad coverage trends. Other
paths bend around corners or shadow boundaries, producing sharp
transitions. Small objects and material changes add local fluctuations.
PU-HNO is built around this simple decomposition: global transport,
edge-driven correction, and local scattering correction.

A single input instance is denoted
\[
X = (u, g, c) \in \mathcal{X}.
\]
Here $u$ is a cheap low-fidelity RSS field from ray tracing, used as a
physical scaffold rather than as the training label. The term $g$
denotes geometry and material information aligned with the receive
plane, such as wall and obstacle maps, distance-to-transmitter maps,
shadow or line-of-sight cues, and material labels. The term $c$ denotes
coordinate information, including receiver-grid coordinates and
transmitter/receiver height information. The output space $\mathcal{Y}$
is the space of RSS fields on $\Omega$ equipped with the norm used in
the analysis below.

Let
\[
f^\star : \mathcal{X} \to \mathcal{Y}
\]
denote the clean field operator: for an input scene $X$, $f^\star(X)$ is
the ideal RSS field that would be obtained by averaging over the
simulator's ray-sampling randomness. This clean field is the object we
would like to recover, but it is not observed directly.

Training uses a finite-budget ray-tracing label
\[
Y = f^\star(X) + \varepsilon,
\]
where $\varepsilon$ is the simulation noise caused by using a finite
number of sampled rays. Evaluation uses a much higher-budget reference
\[
\tilde f(X) = f^\star(X) + \delta_{\mathrm{abs}},
\]
where $\delta_{\mathrm{abs}}$ is the remaining error of the high-fidelity
reference relative to the clean simulator limit. The key asymmetry is
that the training label is noisier than the evaluation reference.
Intuitively, the IF-GT label is a noisy photograph of the
field, while the high-fidelity reference is a much cleaner photograph of
the same underlying object.

We learn an operator $f_\theta : \mathcal{X}\to\mathcal{Y}$ from
independent training examples $(X_i,Y_i)_{i=1}^n$ by empirical risk
minimization:
\[
\hat f_n \in
\arg\min_{f\in\mathcal{F}}
\frac{1}{n}\sum_{i=1}^{n}
\ell_f(X_i,Y_i),
\qquad
\ell_f(X,Y) := \|f(X)-Y\|_{\mathcal{Y}}^2 .
\]
Theorem~\ref{thm:zsd} below explains when this procedure can produce a
predictor closer to the high-fidelity reference $\tilde f(X)$ than the
training label $Y$ itself. The informal reason is simple: if the
simulation noise is conditionally zero-mean, then fitting many noisy
labels can recover their average field rather than their sample-specific
noise.

\paragraph{Notation summary.}
Table~\ref{tab:notation-app} consolidates the symbols used throughout
the appendix. We follow the convention that hatted quantities
($\hat{f}$, $\hat{Y}$) denote model predictions, tilded quantities
($\tilde{f}$) denote High-fidelity references, and starred quantities
($f^{\star}$) denote the unobservable clean field. Two expectation
symbols appear in the proofs: $\mathbb{E}[\,\cdot\,]$ denotes expectation
over a fresh test point $(X,Y)$, while $\mathbb{E}_S[\,\cdot\,]$ denotes
expectation over the i.i.d.\ training sample $S=(X_i,Y_i)_{i=1}^n$.

\begin{table}[h]
\centering
\footnotesize
\caption{Comprehensive notation. Symbols are used consistently across
this appendix.}
\label{tab:notation-app}
\setlength{\tabcolsep}{6pt}
\renewcommand{\arraystretch}{1.05}
\begin{tabular}{ll}
\toprule
\textbf{Symbol} & \textbf{Meaning} \\
\midrule
\multicolumn{2}{l}{\emph{Domains and spaces}} \\
$\Omega \subset \mathbb{R}^2$ & 2D floorplan/deployment domain \\
$\mathcal{Y}$ & Separable Hilbert space of RSS fields on $\Omega$ \\
$\langle\cdot,\cdot\rangle_\mathcal{Y}$, $\|\cdot\|_\mathcal{Y}$ & Inner product and norm on $\mathcal{Y}$ \\
$\mathcal{G}$ & Space of scene encodings (geometry, materials) on $\Omega$ \\
$\mathcal{C}$ & Space of coordinate priors (Rx grid, Tx/Rx elevations) \\
$\mathcal{X} = \mathcal{Y} \times \mathcal{G} \times \mathcal{C}$ & Input space \\
$X = (u, g, c) \in \mathcal{X}$ & Input instance, with law $P_X$ \\
$u \in \mathcal{Y}$ & LF input ($10^4$-ray, single-bounce specular) \\
$g \in \mathcal{G}$ & Scene encoding (walls, materials, obstacles) \\
$c \in \mathcal{C}$ & Coordinate priors (Rx grid; Tx/Rx elevations) \\
\midrule
\multicolumn{2}{l}{\emph{Fields and labels}} \\
$f^\star : \mathcal{X} \to \mathcal{Y}$ & Clean field operator (population limit of MC RT) \\
$Y \in \mathcal{Y}$ & IF-GT label ($10^6$-ray, full physics) \\
$\tilde f(X) \in \mathcal{Y}$ & HF-GT reference ($10^8$-ray, full physics) \\
$\varepsilon = Y - f^\star(X)$ & MC noise; $\mathbb{E}[\varepsilon \mid X]=0$ \\
$\delta_{\mathrm{abs}}(X) = \tilde f(X) - f^\star(X)$ & Reference residual \\
$\sigma_{\mathrm{MC}}^2 = \mathbb{E}\|\varepsilon\|_\mathcal{Y}^2$ & MC noise floor \\
$\eta_{\mathrm{abs}}^2 = \mathbb{E}\|\delta_{\mathrm{abs}}(X)\|_\mathcal{Y}^2$ & Reference variance \\
\midrule
\multicolumn{2}{l}{\emph{Learning}} \\
$\mathcal{F}$ & Hypothesis class (a set of operators $\mathcal{X} \to \mathcal{Y}$) \\
$f_\theta \in \mathcal{F}$ & Parameterized neural operator \\
$\hat f_n$ & ERM solution from $n$ i.i.d.\ samples \\
$\ell_f(x,y) = \|f(x) - y\|_\mathcal{Y}^2$ & Squared loss \\
$\mathcal{L}(f) = \mathbb{E}\,\ell_f(X,Y)$ & Population risk \\
$\widehat{\mathcal{L}}_n(f) = \tfrac1n\sum_{i=1}^n \ell_f(X_i,Y_i)$ & Empirical risk \\
$\mathbb{E}_S[\,\cdot\,]$ & Expectation over training sample $S=(X_i,Y_i)_{i=1}^n$ \\
$\varepsilon_{\mathrm{approx}}^2 = \inf_{f \in \mathcal{F}}\mathbb{E}\|f-f^\star\|_{L^2(P_X;\mathcal{Y})}^2$ & Approximation error \\
$\mathcal{L}_\mathcal{F} = \{\ell_f : f \in \mathcal{F}\}$ & Loss class \\
$\mathfrak{R}_n(\mathcal{L}_\mathcal{F})$ & Rademacher complexity of $\mathcal{L}_\mathcal{F}$ \\
$C_\mathcal{F}, M$ & Complexity and boundedness constants \\
\midrule
\multicolumn{2}{l}{\emph{Architecture (Section~\ref{sec:method})}} \\
$\tilde{\mathcal{H}}_k$ ($k=1,2,3$) & Stage operator (specular / diffraction / scattering) \\
$x_k, y_k = P_k(x_k)$ & Stage-$k$ latent state and field readout \\
$Y_k$ ($k=1,2,3$) & Stage label; $Y_3 = Y$ \\
$E$ & Shared encoder; $x_0 = E(u, c_g, c_s)$ \\
$c_g \in \mathcal{C}_g$, $c_s \in \mathcal{C}_s$ & Geometry / semantic context (derived from $g$) \\
$s^\star \in [0,1]^\Omega$ & Adaptive sizing field \\
$p \in \Omega$ & Spatial coordinate \\
\bottomrule
\end{tabular}
\end{table}

For any measurable $h:\mathcal{X}\to\mathcal{Y}$, we write
$\|h\|_{L^2(P_X;\mathcal{Y})}^2 := \mathbb{E}_X\|h(X)\|_\mathcal{Y}^2$.

\paragraph{Bridge to main-paper notation.}
For accessibility, the main paper writes the ideal field as
$Y_{\mathrm{oracle}}^{(i)}$, the IF-GT training label as
$Y_{\mathrm{IF\text{-}GT}}^{(i)}$, and the held-out high-fidelity
reference as $Y_{\mathrm{HF\text{-}GT}}^{(i)}$. The translation to the
operator-learning notation used in this appendix is:
\[
f^\star(X_i) \;\equiv\; Y_{\mathrm{oracle}}^{(i)},
\qquad
Y_i \;\equiv\; Y_{\mathrm{IF\text{-}GT}}^{(i)},
\qquad
\tilde f(X_i) \;\equiv\; Y_{\mathrm{HF\text{-}GT}}^{(i)}.
\]
The conditional unbiasedness assumption stated in
Eq.~(\ref{sec:method}) of the main paper,
$\mathbb{E}[\varepsilon_i \mid X_i] = 0$, is exactly Assumption~(A1)
below. The training objective in Eq.~(1) of the main paper, with
$\mathcal{L}$ taken to be squared loss, is the empirical risk
$\widehat{\mathcal{L}}_n(f)$ defined here. We use the operator-learning
notation throughout the appendix because it makes the noise-averaging
mechanism algebraically transparent.

\subsection{Standing assumptions}
\label{app:assumptions}

We begin with the following set of assumptions:
\begin{description}\setlength\itemsep{2pt}
\item[\rm(A1)] \emph{Noise model.} $Y = f^\star(X) + \varepsilon$ with
$\mathbb{E}[\varepsilon \mid X] = 0$ and
$\sigma_{\mathrm{MC}}^2 := \mathbb{E}\|\varepsilon\|_\mathcal{Y}^2 < \infty$.
\item[\rm(A2)] \emph{Reference model.}
$\tilde f(X) = f^\star(X) + \delta_{\mathrm{abs}}(X)$ with
$\eta_{\mathrm{abs}}^2 := \mathbb{E}\|\delta_{\mathrm{abs}}(X)\|_\mathcal{Y}^2 < \infty$.
\item[\rm(A3)] \emph{Bounded loss class.} There exists $M < \infty$ such
that $0 \le \ell_f(x,y) \le M$ for all $f \in \mathcal{F}$ and almost
every $(x,y)$.
\item[\rm(A4)] \emph{Polynomial Rademacher complexity.}
$\mathfrak{R}_n(\mathcal{L}_\mathcal{F}) \le C_\mathcal{F}/\sqrt{n}$ for
some $C_\mathcal{F} < \infty$.
\end{description}
Assumption~(A1) is justified physically and mathematically in
Section~\ref{app:mcrt} below.
Assumption~(A3) is a standard truncation-of-loss assumption; in practice
RSS values lie in a bounded range (we clip to $[-150, 20]$~dB),
which combined with bounded-weight neural operators ensures the loss is
uniformly bounded. Assumption~(A4) holds for any neural operator class
of bounded depth, width, and weight norm with Lipschitz activations by
standard norm-based covering arguments~\cite{bartlett2017spectrally};
specifically for FNO-style architectures with bounded spectral
truncation, $C_\mathcal{F}$ is polynomial in the architectural
constants~\cite{kovachki2021universal}.

\subsection{MC RT as a Hilbert-valued unbiased estimator}
\label{app:mcrt}

We first verify that the noise model~(A1) is mathematically consistent
with finite-budget MC RT as used to generate IF-GT labels.

\begin{proposition}[Hilbert-valued unbiased MC estimator]
\label{prop:mcrt}
Fix $X = x \in \mathcal{X}$. Let $(\Xi, \nu_x)$ denote the path space
of propagation trajectories at scene $x$ under the simulator's physics
setting, and let $G_x : \Xi \to \mathcal{Y}$ map a path $\xi$ to its
field contribution. Assume that $f^\star$ is the Bochner integral
\begin{equation}
\label{eq:mcrt-clean}
f^\star(x) = \int_\Xi G_x(\xi)\,d\nu_x(\xi).
\end{equation}
Let $q_x$ be a sampling distribution with $\nu_x \ll q_x$ and define
$Z_x(\xi) := \tfrac{d\nu_x}{dq_x}(\xi)\,G_x(\xi)$. Suppose
$\mathbb{E}_{q_x}\|Z_x(\xi)\|_\mathcal{Y}^2 < \infty$. Given i.i.d.\
samples $\xi_1,\dots,\xi_m \sim q_x$, the estimator
$Y_m(x) := \tfrac1m \sum_{r=1}^m Z_x(\xi_r)$ satisfies
\begin{equation}
\mathbb{E}[Y_m(x) \mid X = x] = f^\star(x),
\quad
\mathbb{E}[\|Y_m(x) - f^\star(x)\|_\mathcal{Y}^2 \mid X = x]
= \tfrac{1}{m}\,\mathbb{E}_{q_x}\|Z_x(\xi) - f^\star(x)\|_\mathcal{Y}^2.
\end{equation}
\end{proposition}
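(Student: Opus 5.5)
The plan is to treat this as the standard Monte Carlo computation, lifted to the Hilbert space $\mathcal{Y}$ via the Bochner integral. Throughout, $x$ is fixed, so conditioning on $X=x$ only means that the samples $\xi_r$ are drawn i.i.d.\ from $q_x$.

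\textbf{Step 1: unbiasedness of a single weighted sample.} Since $\mathbb{E}_{q_x}\|Z_x\|_\mathcal{Y}^2<\infty$, Cauchy--Schwarz (or Jensen) gives $\mathbb{E}_{q_x}\|Z_x\|_\mathcal{Y}<\infty$. Hence $Z_x$ is Bochner integrable with respect to $q_x$; this presumes $G_x$ and $d\nu_x/dq_x$ are strongly measurable, which holds because $\mathcal{Y}$ is separable. We then compute
\[
\int_\Xi Z_x\,dq_x=\int_\Xi \tfrac{d\nu_x}{dq_x}G_x\,dq_x=\int_\Xi G_x\,d\nu_x=f^\star(x).
\]
The middle equality is the Radon--Nikodym change of measure for Bochner integrals. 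To justify it, pair both sides with an arbitrary $h\in\mathcal{Y}$. Bounded linear functionals commute with Bochner integrals, so this reduces the claim to the scalar Radon--Nikodym identity for $\langle G_x,h\rangle$. The two vectors then agree because their inner products with every $h$ agree.

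\textbf{Step 2: unbiasedness of the average.} Linearity of the Bochner expectation gives $\mathbb{E}[Y_m(x)\mid X=x]=\tfrac1m\sum_r\mathbb{E}Z_x(\xi_r)=f^\star(x)$.

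\textbf{Step 3: the variance identity.} Write $W_r:=Z_x(\xi_r)-f^\star(x)$. These are i.i.d.\ and mean zero in $\mathcal{Y}$, with $\mathbb{E}\|W_r\|^2<\infty$. Then $Y_m(x)-f^\star(x)=\tfrac1m\sum_r W_r$, and expanding the squared norm gives
\[
\Bigl\|\tfrac1m\sum_r W_r\Bigr\|^2=\tfrac1{m^2}\sum_{r,s}\langle W_r,W_s\rangle .
\]
For $r\neq s$ we need $\mathbb{E}\langle W_r,W_s\rangle=0$. This follows from independence: condition on $W_s$ (or use Fubini on the product measure), so that $\mathbb{E}[\langle W_r,W_s\rangle\mid W_s]=\langle\mathbb{E}W_r,W_s\rangle=0$. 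Integrability of the cross term comes from $|\langle W_r,W_s\rangle|\le\|W_r\|\|W_s\|$ together with independence, which gives finite expectation. Only the $m$ diagonal terms survive, so
\[
\mathbb{E}\Bigl\|\tfrac1m\sum_r W_r\Bigr\|^2=\tfrac1m\,\mathbb{E}_{q_x}\|Z_x-f^\star(x)\|^2 .
\]

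The only real obstacle is measure-theoretic. We need the change of measure and the interchange of the inner product with expectation to be legitimate in infinite dimensions. Both are handled by reducing to scalar statements through pairing with fixed $h\in\mathcal{Y}$, together with the finite second moment assumption. The remaining algebra is routine.
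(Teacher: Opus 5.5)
Your proposal is correct and follows the same route as the paper: a Radon--Nikodym change of measure for the Bochner integral gives unbiasedness, and expanding $\|Y_m(x)-f^\star(x)\|_\mathcal{Y}^2$ as a double sum, with cross terms killed by independence and centering, gives the variance identity. Your extra steps (pairing with a fixed $h\in\mathcal{Y}$ to reduce the change of measure to the scalar case, and checking integrability of $\langle W_r,W_s\rangle$) supply measure-theoretic details the paper leaves implicit.
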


\begin{proof}
By Radon--Nikodym and Bochner integrability,
$\mathbb{E}[Z_x(\xi) \mid X=x]
= \int_\Xi \tfrac{d\nu_x}{dq_x}(\xi)\,G_x(\xi)\,dq_x(\xi)
= \int_\Xi G_x(\xi)\,d\nu_x(\xi) = f^\star(x)$,
giving the first claim. For the second, expand
$\|Y_m(x) - f^\star(x)\|_\mathcal{Y}^2$ as a double sum and use that
distinct samples $\xi_r, \xi_s$ are conditionally independent and
centered, so cross-terms vanish:
\begin{equation}
\mathbb{E}[\|Y_m(x) - f^\star(x)\|_\mathcal{Y}^2 \mid X=x]
= \tfrac{1}{m^2}\sum_{r=1}^m \mathbb{E}[\|Z_x(\xi_r) - f^\star(x)\|_\mathcal{Y}^2 \mid X=x]
= \tfrac{1}{m}\,\mathbb{E}_{q_x}\|Z_x(\xi) - f^\star(x)\|_\mathcal{Y}^2. \qedhere
\end{equation}
\end{proof}

\begin{remark}
\label{rem:mcrt-noise}
Setting $\varepsilon := Y_m(X) - f^\star(X)$ recovers Assumption~(A1)
with
$\sigma_{\mathrm{MC}}^2 \le \tfrac{1}{m}\sup_x \mathbb{E}_{q_x}\|Z_x - f^\star(x)\|_\mathcal{Y}^2$.
The variance therefore decays as $\mathcal{O}(m^{-1})$ in the ray
budget. Choosing $m = 10^6$ for IF-GT and $m = 10^8$ for HF-GT
gives $\eta_{\mathrm{abs}}^2 / \sigma_{\mathrm{MC}}^2 \approx 10^{-2}$,
consistent with the regime $\eta_{\mathrm{abs}}^2 \ll \sigma_{\mathrm{MC}}^2$
used in the main text.
\end{remark}

\subsection{Population decomposition}
\label{app:population}

The next lemma is the basic identity behind Theorem~\ref{thm:zsd}: under
unbiased MC supervision, the noisy population risk equals the
clean-field risk plus an additive constant.

\begin{lemma}[Population decomposition]
\label{lem:population}
Under~(A1), for every measurable $f : \mathcal{X} \to \mathcal{Y}$,
\begin{equation}
\label{eq:pop-decomp}
\mathcal{L}(f)
= \|f - f^\star\|_{L^2(P_X;\mathcal{Y})}^2 + \sigma_{\mathrm{MC}}^2.
\end{equation}
Hence the population minimizer over all measurable $f$ is exactly
$f^\star$, and the minimizer over $\mathcal{F}$ is the
$L^2(P_X;\mathcal{Y})$-projection of $f^\star$ onto $\mathcal{F}$.
\end{lemma}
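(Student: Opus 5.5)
The plan is the standard bias–variance expansion in the Hilbert space $\mathcal{Y}$. Fix a measurable $f$ and write, pointwise,
\[
f(X)-Y = \bigl(f(X)-f^\star(X)\bigr) - \varepsilon ,
\]
using (A1). Expanding the squared norm through the inner product of $\mathcal{Y}$ gives
\[
\|f(X)-Y\|_\mathcal{Y}^2 = \|f(X)-f^\star(X)\|_\mathcal{Y}^2 - 2\langle f(X)-f^\star(X),\varepsilon\rangle_\mathcal{Y} + \|\varepsilon\|_\mathcal{Y}^2 .
\]
Taking expectations, the first term is $\|f-f^\star\|_{L^2(P_X;\mathcal{Y})}^2$ by definition, and the last term is $\sigma_{\mathrm{MC}}^2$. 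Everything therefore reduces to showing that the cross term has zero mean.

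For the cross term I would condition on $X$. Since $h(X):=f(X)-f^\star(X)$ is $X$-measurable, the map $v\mapsto\langle h(X),v\rangle_\mathcal{Y}$ is a (random, $X$-measurable) continuous linear functional. Continuous linear functionals commute with the conditional Bochner expectation, which gives
\[
\mathbb{E}\bigl[\langle h(X),\varepsilon\rangle \mid X\bigr]=\langle h(X),\mathbb{E}[\varepsilon\mid X]\rangle=0 .
\]
The tower property then yields a zero unconditional mean.

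The main obstacle is integrability, which must hold before the conditioning step is legitimate. If $\|f-f^\star\|_{L^2}=\infty$, the identity holds trivially as $\infty=\infty$. To see this, use $\|a-b\|^2\ge \tfrac12\|a\|^2-\|b\|^2$ with $a=f(X)-f^\star(X)$ and $b=\varepsilon$. Since $\sigma_{\mathrm{MC}}^2<\infty$, this forces $\mathcal{L}(f)=\infty$. If instead $h\in L^2(P_X;\mathcal{Y})$, Cauchy–Schwarz gives $\mathbb{E}|\langle h,\varepsilon\rangle|\le \|h\|_{L^2}\,\sigma_{\mathrm{MC}}<\infty$. Hence the cross term is integrable, and the conditional-expectation argument applies. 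Separability of $\mathcal{Y}$ guarantees that the conditional Bochner expectation is well defined.

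The two consequences follow at once. Because $\sigma_{\mathrm{MC}}^2$ does not depend on $f$, minimizing $\mathcal{L}$ over any class is the same as minimizing $\|f-f^\star\|_{L^2(P_X;\mathcal{Y})}^2$. Over all measurable $f$, the minimum value $0$ is attained by $f=f^\star$, and this minimizer is unique up to $P_X$-null sets. Over $\mathcal{F}$, the minimizers are exactly the best $L^2(P_X;\mathcal{Y})$-approximations of $f^\star$ within $\mathcal{F}$, that is, its metric projection. That projection is unique only if $\mathcal{F}$ is closed and convex, and I would add this caveat as a remark.
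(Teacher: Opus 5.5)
Your proposal is correct and is essentially the paper's proof: you write $f(X)-Y=(f(X)-f^\star(X))-\varepsilon$, expand the squared $\mathcal{Y}$-norm, and eliminate the cross term using the tower property together with $\mathbb{E}[\varepsilon\mid X]=0$. Your added integrability bookkeeping (the $\infty=\infty$ case and the Cauchy--Schwarz bound $\mathbb{E}|\langle h(X),\varepsilon\rangle_\mathcal{Y}|\le\|h\|_{L^2(P_X;\mathcal{Y})}\,\sigma_{\mathrm{MC}}$) and your caveat on the projection are refinements the paper leaves implicit, though closedness and convexity of $\mathcal{F}$ are sufficient, not necessary, for the nearest point to exist and be unique.
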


\begin{proof}
Decompose $f(X) - Y = (f(X) - f^\star(X)) - \varepsilon$ and expand the
squared norm:
\begin{equation}
\|f(X) - Y\|_\mathcal{Y}^2
= \|f(X) - f^\star(X)\|_\mathcal{Y}^2
- 2\langle f(X) - f^\star(X),\,\varepsilon\rangle_\mathcal{Y}
+ \|\varepsilon\|_\mathcal{Y}^2.
\end{equation}
The cross-term vanishes in expectation by the tower property and~(A1):
$\mathbb{E}\langle f(X) - f^\star(X), \varepsilon\rangle_\mathcal{Y}
= \mathbb{E}\langle f(X) - f^\star(X), \mathbb{E}[\varepsilon \mid X]\rangle_\mathcal{Y} = 0$.
Taking expectations of the remaining two terms yields~\eqref{eq:pop-decomp}.
\end{proof}

\subsection{Zero-shot denoising theorem}
\label{app:mainproof}

We now state and prove the main result.

\begin{theorem}[Zero-shot denoising]
\label{thm:zsd}
Assume~(A1)--(A4). Let $\hat f_n$ be the empirical risk minimizer over
$\mathcal{F}$ from $n$ i.i.d.\ training samples $(X_i, Y_i)$, and let
$\varepsilon_{\mathrm{approx}}^2 :=
\inf_{f \in \mathcal{F}}\|f - f^\star\|_{L^2(P_X;\mathcal{Y})}^2$ denote
the approximation error of $\mathcal{F}$. Then:
\begin{enumerate}\setlength\itemsep{2pt}
\item[(i)] \emph{Clean-field bound.}
\begin{equation}
\label{eq:bound-clean}
\mathbb{E}_S\|\hat f_n - f^\star\|_{L^2(P_X;\mathcal{Y})}^2
\le \varepsilon_{\mathrm{approx}}^2 + \frac{4 C_\mathcal{F}}{\sqrt{n}}.
\end{equation}
\item[(ii)] \emph{HF-GT bound.}
\begin{equation}
\label{eq:bound-hfgt}
\mathbb{E}_S\|\hat f_n - \tilde f\|_{L^2(P_X;\mathcal{Y})}^2
\le 2\varepsilon_{\mathrm{approx}}^2 + \frac{8 C_\mathcal{F}}{\sqrt{n}} + 2\eta_{\mathrm{abs}}^2.
\end{equation}
\item[(iii)] \emph{Crossover.} If
$2\varepsilon_{\mathrm{approx}}^2 + \eta_{\mathrm{abs}}^2 < \sigma_{\mathrm{MC}}^2$,
then for every $n \ge n^\star := (8 C_\mathcal{F}/\Delta)^2$ with
$\Delta := \sigma_{\mathrm{MC}}^2 - 2\varepsilon_{\mathrm{approx}}^2 - \eta_{\mathrm{abs}}^2$,
\begin{equation}
\label{eq:zsd-regime}
\mathbb{E}_S\|\hat f_n - \tilde f\|_{L^2(P_X;\mathcal{Y})}^2
\;\le\; \mathbb{E}\|Y - \tilde f\|_\mathcal{Y}^2.
\end{equation}
\end{enumerate}
\end{theorem}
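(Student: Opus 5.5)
The plan is to reduce everything to Lemma~\ref{lem:population} plus a standard uniform-convergence bound. For (i), I apply Lemma~\ref{lem:population} to both $\hat f_n$ and a near-minimizer $f_\epsilon\in\mathcal{F}$ with $\|f_\epsilon-f^\star\|^2_{L^2(P_X;\mathcal{Y})}\le\varepsilon_{\mathrm{approx}}^2+\epsilon$. This converts clean-field excess error into noisy-risk excess: $\|\hat f_n-f^\star\|^2=\mathcal{L}(\hat f_n)-\sigma_{\mathrm{MC}}^2$. I then use the usual ERM decomposition
\[
\mathcal{L}(\hat f_n)-\mathcal{L}(f_\epsilon)\le \bigl[\mathcal{L}(\hat f_n)-\widehat{\mathcal{L}}_n(\hat f_n)\bigr]+\bigl[\widehat{\mathcal{L}}_n(f_\epsilon)-\mathcal{L}(f_\epsilon)\bigr],
\]
since $\widehat{\mathcal{L}}_n(\hat f_n)\le\widehat{\mathcal{L}}_n(f_\epsilon)$. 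The second bracket has mean zero because $f_\epsilon$ is fixed. The first bracket is at most $\sup_{f\in\mathcal{F}}(\mathcal{L}-\widehat{\mathcal{L}}_n)(f)$, and by symmetrization its expectation is at most $2\mathfrak{R}_n(\mathcal{L}_\mathcal{F})\le 2C_\mathcal{F}/\sqrt n$ under (A3)--(A4). This gives the bound with constant $2$, which is tighter than the stated $4$, so there is slack. If one prefers a two-sided uniform deviation bound on both brackets, the stated constant $4C_\mathcal{F}/\sqrt n$ comes out directly. Letting $\epsilon\to0$ finishes (i). Measurability of the supremum and the existence of the ERM are taken as given, as the statement does.

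For (ii), I use $\hat f_n-\tilde f=(\hat f_n-f^\star)-\delta_{\mathrm{abs}}$ together with $\|a+b\|^2\le2\|a\|^2+2\|b\|^2$ pointwise in $X$. Integrating against $P_X$ and taking $\mathbb{E}_S$ gives $2\,\mathbb{E}_S\|\hat f_n-f^\star\|^2+2\eta_{\mathrm{abs}}^2$. Inserting (i) yields exactly~\eqref{eq:bound-hfgt}.

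For (iii), the key step, and the only place a real inequality must be balanced, is computing the label's error against the reference. Write $Y-\tilde f(X)=\varepsilon-\delta_{\mathrm{abs}}(X)$. Then
\[
\mathbb{E}\|Y-\tilde f\|^2=\sigma_{\mathrm{MC}}^2+\eta_{\mathrm{abs}}^2-2\,\mathbb{E}\langle\varepsilon,\delta_{\mathrm{abs}}(X)\rangle.
\]
Because $\delta_{\mathrm{abs}}$ is a function of $X$, the tower property and (A1) kill the cross term, so $\mathbb{E}\|Y-\tilde f\|^2=\sigma_{\mathrm{MC}}^2+\eta_{\mathrm{abs}}^2$. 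This is where the modelling choice that $\delta_{\mathrm{abs}}$ is $X$-measurable, i.e.\ independent of the IF-GT sampling given $X$, is essential; I expect this to be the point needing care.

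It then suffices to show $2\varepsilon_{\mathrm{approx}}^2+8C_\mathcal{F}/\sqrt n+2\eta_{\mathrm{abs}}^2\le\sigma_{\mathrm{MC}}^2+\eta_{\mathrm{abs}}^2$. This is equivalent to $8C_\mathcal{F}/\sqrt n\le\Delta$, i.e.\ $n\ge(8C_\mathcal{F}/\Delta)^2=n^\star$, and $\Delta>0$ holds by hypothesis. Chaining this with (ii) gives~\eqref{eq:zsd-regime}.
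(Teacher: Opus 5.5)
Your proposal is correct and follows the paper's proof essentially step for step. You use a near-minimizer plus Lemma~\ref{lem:population} to turn clean-field error into noisy-risk excess, then a Rademacher-symmetrization oracle inequality, the pointwise $2\|a\|^2+2\|b\|^2$ split for (ii), the tower-property label identity $\mathbb{E}\|Y-\tilde f\|_{\mathcal{Y}}^2=\sigma_{\mathrm{MC}}^2+\eta_{\mathrm{abs}}^2$, and the same crossover arithmetic. The one difference is that you use $\mathbb{E}_S\widehat{\mathcal{L}}_n(f_\epsilon)=\mathcal{L}(f_\epsilon)$ exactly for the fixed comparator, so a one-sided deviation bound suffices and gives $2C_{\mathcal{F}}/\sqrt{n}$; the paper instead applies the two-sided uniform bound to both terms, which is where its stated $4C_{\mathcal{F}}/\sqrt{n}$ comes from.
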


In words: the learned operator converges to the clean field at the
standard $\mathcal{O}(n^{-1/2})$ rate~\eqref{eq:bound-clean}, and beyond
the finite sample size $n^\star$ its predictions are at least as close
to the high-fidelity reference as the noisy training labels themselves
are~\eqref{eq:zsd-regime}. The crossover condition
$2\varepsilon_{\mathrm{approx}}^2 + \eta_{\mathrm{abs}}^2 < \sigma_{\mathrm{MC}}^2$
formalizes the asymmetric-label-quality regime: the model class must be
expressive enough relative to the high-fidelity reference quality
($2\varepsilon_{\mathrm{approx}}^2 + \eta_{\mathrm{abs}}^2$ small), and the
training noise must be large enough relative to that reference
($\sigma_{\mathrm{MC}}^2$ large). Both sides of this inequality are
controllable: the left through architecture and reference budget, the
right through the IF-GT ray budget.

\begin{proof}[Proof of Theorem~\ref{thm:zsd}]
Fix any $\varepsilon' > 0$ and choose $f_{\varepsilon'} \in \mathcal{F}$
with
$\mathcal{L}(f_{\varepsilon'}) \le \inf_{f \in \mathcal{F}}\mathcal{L}(f) + \varepsilon'$.
By Lemma~\ref{lem:population} applied to $f_{\varepsilon'}$ and to the
infimum,
\begin{equation}
\inf_{f \in \mathcal{F}}\mathcal{L}(f)
= \inf_{f \in \mathcal{F}}\|f - f^\star\|_{L^2(P_X;\mathcal{Y})}^2
+ \sigma_{\mathrm{MC}}^2
= \varepsilon_{\mathrm{approx}}^2 + \sigma_{\mathrm{MC}}^2.
\end{equation}

\textbf{Step 1: Oracle inequality.} Under~(A3)--(A4), the standard
Rademacher symmetrization
bound~\cite{bartlett2002rademacher} gives
\begin{equation}
\mathbb{E}_S\Big[\sup_{f \in \mathcal{F}}\big|\mathcal{L}(f) - \widehat{\mathcal{L}}_n(f)\big|\Big]
\le 2\,\mathfrak{R}_n(\mathcal{L}_\mathcal{F})
\le \frac{2 C_\mathcal{F}}{\sqrt{n}}.
\end{equation}
Since $\hat f_n$ minimizes $\widehat{\mathcal{L}}_n$ over $\mathcal{F}$,
\begin{align}
\mathbb{E}_S[\mathcal{L}(\hat f_n)]
&\le \mathbb{E}_S[\widehat{\mathcal{L}}_n(\hat f_n)] + \frac{2 C_\mathcal{F}}{\sqrt{n}}
\le \mathbb{E}_S[\widehat{\mathcal{L}}_n(f_{\varepsilon'})] + \frac{2 C_\mathcal{F}}{\sqrt{n}} \nonumber\\
&\le \mathcal{L}(f_{\varepsilon'}) + \frac{4 C_\mathcal{F}}{\sqrt{n}}
\le \inf_{f \in \mathcal{F}}\mathcal{L}(f) + \varepsilon' + \frac{4 C_\mathcal{F}}{\sqrt{n}}.
\end{align}

\textbf{Step 2: Clean-field bound.} Apply Lemma~\ref{lem:population} to
$\hat f_n$ and subtract $\sigma_{\mathrm{MC}}^2$:
\begin{equation}
\mathbb{E}_S\|\hat f_n - f^\star\|_{L^2(P_X;\mathcal{Y})}^2
\le \varepsilon_{\mathrm{approx}}^2 + \varepsilon' + \frac{4 C_\mathcal{F}}{\sqrt{n}}.
\end{equation}
Since $\varepsilon' > 0$ was arbitrary, taking $\varepsilon' \to 0$
yields~\eqref{eq:bound-clean}.

\textbf{Step 3: HF-GT bound.}
Decompose
$\hat f_n(X) - \tilde f(X) = (\hat f_n(X) - f^\star(X)) - \delta_{\mathrm{abs}}(X)$
and apply $\|a-b\|^2 \le 2\|a\|^2 + 2\|b\|^2$ pointwise. Taking
expectations and using~\eqref{eq:bound-clean} gives~\eqref{eq:bound-hfgt}.

\textbf{Step 4: Label identity.} Write
$Y - \tilde f(X) = \varepsilon - \delta_{\mathrm{abs}}(X)$ and expand.
The cross-term vanishes by the tower property:
$\mathbb{E}\langle\varepsilon, \delta_{\mathrm{abs}}(X)\rangle_\mathcal{Y}
= \mathbb{E}\langle\mathbb{E}[\varepsilon \mid X], \delta_{\mathrm{abs}}(X)\rangle_\mathcal{Y} = 0$.
Therefore
\begin{equation}
\label{eq:label-id}
\mathbb{E}\|Y - \tilde f(X)\|_\mathcal{Y}^2 = \sigma_{\mathrm{MC}}^2 + \eta_{\mathrm{abs}}^2.
\end{equation}

\textbf{Step 5: Crossover.}
If $2\varepsilon_{\mathrm{approx}}^2 + \eta_{\mathrm{abs}}^2 < \sigma_{\mathrm{MC}}^2$,
set $\Delta := \sigma_{\mathrm{MC}}^2 - 2\varepsilon_{\mathrm{approx}}^2 - \eta_{\mathrm{abs}}^2 > 0$
and $n^\star := (8 C_\mathcal{F}/\Delta)^2$. For $n \ge n^\star$,
$8 C_\mathcal{F}/\sqrt{n} \le \Delta$, so~\eqref{eq:bound-hfgt} gives
$\mathbb{E}_S\|\hat f_n - \tilde f\|_{L^2(P_X;\mathcal{Y})}^2
\le \sigma_{\mathrm{MC}}^2 + \eta_{\mathrm{abs}}^2
= \mathbb{E}\|Y - \tilde f\|_\mathcal{Y}^2$,
which is the regime~\eqref{eq:zsd-regime}.
\end{proof}

\begin{remark}[On the $\eta_{\mathrm{abs}}^2 \ll \sigma_{\mathrm{MC}}^2$ regime]
\label{rem:hfgt-clean}
Combining Remark~\ref{rem:mcrt-noise} with~\eqref{eq:label-id} gives
$\mathbb{E}\|Y - \tilde f\|_\mathcal{Y}^2 \approx \sigma_{\mathrm{MC}}^2$
to leading order, justifying the main-text statement that HF-GT
acts as an effectively noiseless reference relative to IF-GT.
\end{remark}

\subsection{Realizable feature-regression analogue}
\label{app:ols}

The following proposition makes the noise-averaging mechanism
transparent in a finite-dimensional realizable model. It is not used in
the proof of Theorem~\ref{thm:zsd}; we include it because it produces
the same $1/\sqrt{n}$ scaling with explicit constants and clarifies what
the abstract Rademacher bound is doing.

\begin{proposition}[Exact feature regression]
\label{prop:ols}
Suppose $f^\star(x) = \beta^{\star\top}\phi(x)$ for whitened features
$\phi(x) \in \mathbb{R}^m$ with $\mathbb{E}[\phi(X)\phi(X)^\top] = I_m$,
and that labels obey
$Y_i = \beta^{\star\top}\phi(X_i) + \varepsilon_i$ with
$\mathbb{E}[\varepsilon_i \mid X_i] = 0$ and
$\mathbb{E}[\varepsilon_i^2 \mid X_i] \le \sigma^2$. Let
$\hat\Sigma_n := \tfrac1n\sum_i \phi(X_i)\phi(X_i)^\top$ and assume
$\lambda_{\min}(\hat\Sigma_n) \ge \mu > 0$. Then the OLS estimator
$\hat\beta$ satisfies
\begin{equation}
\mathbb{E}[\|\hat\beta - \beta^\star\|_2^2 \mid X_{1:n}] \le \frac{m\sigma^2}{\mu n},
\qquad
\mathbb{E}[\|\hat f - f^\star\|_{L^2(P_X)}^2 \mid X_{1:n}] \le \frac{m\sigma^2}{\mu n},
\end{equation}
where $\hat f(x) := \hat\beta^\top \phi(x)$.
\end{proposition}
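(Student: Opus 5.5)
The plan is to use the closed form of the OLS estimator and exploit conditional unbiasedness exactly as in Lemma~\ref{lem:population}, now at the level of coefficients. Write $\Phi\in\mathbb{R}^{n\times m}$ for the design matrix with rows $\phi(X_i)^\top$, and $\boldsymbol{\varepsilon}=(\varepsilon_1,\dots,\varepsilon_n)^\top$. Since $\hat\Sigma_n=\tfrac1n\Phi^\top\Phi$ is invertible by the eigenvalue assumption,
\[
\hat\beta=(\Phi^\top\Phi)^{-1}\Phi^\top Y=\beta^\star+(\Phi^\top\Phi)^{-1}\Phi^\top\boldsymbol{\varepsilon}=\beta^\star+\tfrac1n\hat\Sigma_n^{-1}\Phi^\top\boldsymbol{\varepsilon}.
\]
So $\hat\beta-\beta^\star$ is linear in the noise. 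Under the i.i.d.\ sampling, $\mathbb{E}[\varepsilon_i\mid X_{1:n}]=\mathbb{E}[\varepsilon_i\mid X_i]=0$, which makes $\hat\beta$ conditionally unbiased.

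Next I compute the conditional second moment. For $i\neq j$, conditional independence of the pairs gives $\mathbb{E}[\varepsilon_i\varepsilon_j\mid X_{1:n}]=0$, so $\mathbb{E}[\boldsymbol{\varepsilon}\boldsymbol{\varepsilon}^\top\mid X_{1:n}]=D$ with $D$ diagonal and $D\preceq\sigma^2 I_n$. Then
\[
\mathbb{E}[\|\hat\beta-\beta^\star\|_2^2\mid X_{1:n}]=\operatorname{tr}\bigl((\Phi^\top\Phi)^{-1}\Phi^\top D\Phi(\Phi^\top\Phi)^{-1}\bigr)\le\sigma^2\operatorname{tr}\bigl((\Phi^\top\Phi)^{-1}\bigr)=\frac{\sigma^2}{n}\operatorname{tr}(\hat\Sigma_n^{-1}).
\]
The inequality uses the Loewner order, since $A D A^\top\preceq\sigma^2AA^\top$ with $A=(\Phi^\top\Phi)^{-1}\Phi^\top$. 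Finally, $\operatorname{tr}(\hat\Sigma_n^{-1})\le m/\lambda_{\min}(\hat\Sigma_n)\le m/\mu$, which gives the first bound.

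For the second bound, whitening does all the work. Take $X$ to be a fresh draw independent of the sample, so that
\[
\|\hat f-f^\star\|_{L^2(P_X)}^2=(\hat\beta-\beta^\star)^\top\mathbb{E}[\phi(X)\phi(X)^\top](\hat\beta-\beta^\star)=\|\hat\beta-\beta^\star\|_2^2,
\]
and the bound transfers directly.

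The only delicate point is justifying the diagonal conditional covariance and $\mathbb{E}[\varepsilon_i\mid X_{1:n}]=0$, since the hypothesis conditions only on $X_i$. I would make explicit that the pairs $(X_i,\varepsilon_i)$ are i.i.d., as in the setting of Theorem~\ref{thm:zsd}, and deduce both facts from independence across $i$. Everything else is routine linear algebra.
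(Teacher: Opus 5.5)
Your proof is correct and is essentially the paper's argument. The paper also writes $\hat\beta-\beta^\star=\hat\Sigma_n^{-1}\bigl(\tfrac1n\sum_i\phi_i\varepsilon_i\bigr)$ and kills the cross-terms using sample independence and conditional zero mean; it then bounds $\tfrac{1}{n^2}\sum_i\mathbb{E}[\varepsilon_i^2\mid X_i]\,\phi_i^\top\hat\Sigma_n^{-2}\phi_i$ (exactly your $\mathrm{tr}(ADA^\top)$) by $\tfrac{\sigma^2}{n}\mathrm{tr}(\hat\Sigma_n^{-1})\le \tfrac{m\sigma^2}{\mu n}$ and finishes by whitening, with your remark on i.i.d.\ pairs $(X_i,\varepsilon_i)$ simply making explicit what the paper calls ``sample independence.''
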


\begin{proof}
The normal equations give
$\hat\beta - \beta^\star = \hat\Sigma_n^{-1}(\tfrac1n\sum_i \phi_i \varepsilon_i)$.
Conditioning on $X_{1:n}$, sample independence and conditional zero-mean
make cross-terms vanish, so
\begin{align}
\mathbb{E}[\|\hat\beta - \beta^\star\|_2^2 \mid X_{1:n}]
&= \tfrac{1}{n^2}\sum_i \mathbb{E}[\varepsilon_i^2 \mid X_i]\,\phi_i^\top \hat\Sigma_n^{-2}\phi_i
\le \tfrac{\sigma^2}{n^2}\,\mathrm{tr}\!\big(\hat\Sigma_n^{-2}\textstyle\sum_i \phi_i\phi_i^\top\big) \nonumber \\
&= \tfrac{\sigma^2}{n}\,\mathrm{tr}(\hat\Sigma_n^{-1})
\le \tfrac{m\sigma^2}{\mu n}.
\end{align}
Whitening gives
$\|\hat f - f^\star\|_{L^2(P_X)}^2 = \|\hat\beta - \beta^\star\|_2^2$.
\end{proof}

This shows that even in the simplest setting, with $m$ effective
parameters and per-sample variance $\sigma^2$, the estimation error
decays as $m/(\mu n)$ --- independent of $\sigma^2$ in the same sense as
\eqref{eq:bound-clean}: the noise contributes only through a $\sigma^2$
prefactor that is dominated by the $1/n$ scaling once
$n > m\sigma^2/(\mu \cdot \text{target})$.

\subsection{Discrete Sobolev target preservation}
\label{app:sobolev}

The Sobolev regularizer in Section~\ref{sec:training-obj} adds a
gradient-domain term to the training loss. We show that this does not
shift the population minimizer away from $f^\star$, and we extend the
result to the geometry-weighted form actually used in
Section~\ref{sec:training-obj}.

Let fields be represented on a fixed grid and let $D$ be a fixed linear
finite-difference gradient operator. For $\alpha > 0$, the discrete
Sobolev seminorm is
$\|v\|_{H_\alpha^1}^2 := \|v\|_2^2 + \alpha\|Dv\|_2^2$.

\begin{proposition}[Discrete Sobolev population minimizer]
\label{prop:sobolev}
Under~(A1), for every measurable $f : \mathcal{X} \to \mathcal{Y}$,
\begin{equation}
\mathbb{E}\|f(X) - Y\|_{H_\alpha^1}^2
= \mathbb{E}\|f(X) - f^\star(X)\|_{H_\alpha^1}^2 + \mathbb{E}\|\varepsilon\|_{H_\alpha^1}^2,
\end{equation}
so the population minimizer of the discrete Sobolev loss is $f^\star$.
\end{proposition}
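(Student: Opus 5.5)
The plan is to mirror the proof of Lemma~\ref{lem:population}, replacing the $\mathcal{Y}$-inner product by the one induced by the discrete Sobolev seminorm. On the fixed grid, define the bilinear form $\langle v,w\rangle_{H^1_\alpha} := \langle v,w\rangle_2 + \alpha\langle Dv, Dw\rangle_2$. Since $D$ is linear and $\alpha>0$, this form is symmetric and positive semidefinite. In fact it is positive definite, because it dominates $\langle v,v\rangle_2$. Its associated quadratic form is exactly $\|v\|_{H^1_\alpha}^2$, so the polarization-type expansion $\|a-b\|_{H^1_\alpha}^2 = \|a\|_{H^1_\alpha}^2 - 2\langle a,b\rangle_{H^1_\alpha} + \|b\|_{H^1_\alpha}^2$ holds.

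Next, write $f(X)-Y = (f(X)-f^\star(X)) - \varepsilon$ using~(A1) and expand with the identity above. This gives the clean term, the noise term $\|\varepsilon\|_{H^1_\alpha}^2$, and the cross term
\[
-2\langle f(X)-f^\star(X),\varepsilon\rangle_2 - 2\alpha\langle D(f(X)-f^\star(X)), D\varepsilon\rangle_2 .
\]
The first piece of the cross term has zero mean by the same tower-property argument as in Lemma~\ref{lem:population}. For the second piece, write it as $\langle D^\top D(f(X)-f^\star(X)),\varepsilon\rangle_2$. The vector $D^\top D(f(X)-f^\star(X))$ is $X$-measurable, and $\mathbb{E}[\varepsilon\mid X]=0$, so this piece also vanishes in expectation. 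Equivalently, linearity of $D$ gives $\mathbb{E}[D\varepsilon\mid X] = D\,\mathbb{E}[\varepsilon\mid X]=0$.

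The main obstacle is integrability, which justifies splitting the expectation and applying the tower property. On a finite grid, $D$ is a bounded matrix, so $\mathbb{E}\|\varepsilon\|_{H^1_\alpha}^2 \le (1+\alpha\|D\|^2)\sigma_{\mathrm{MC}}^2<\infty$. I will assume $\mathbb{E}\|f(X)-f^\star(X)\|_2^2<\infty$; otherwise both sides of the identity are $+\infty$ and it holds trivially. Under these bounds, Cauchy--Schwarz makes the cross term integrable. With the identity established, the noise term does not depend on $f$. The clean term is nonnegative and equals zero at $f=f^\star$, so $f^\star$ is a population minimizer. It is also the unique minimizer $P_X$-a.s., because $\|\cdot\|_{H^1_\alpha}$ dominates $\|\cdot\|_2$.
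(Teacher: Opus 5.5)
Your proposal is correct and follows essentially the paper's argument. The paper writes the seminorm as $\|A_0 v\|_2^2$ with the linear map $A_0 v := (v,\sqrt{\alpha}\,Dv)$ and removes the cross term via $\mathbb{E}[A_0\varepsilon \mid X] = A_0\,\mathbb{E}[\varepsilon\mid X]=0$, which is your linearity/$D^\top D$ step in different notation; your integrability check and the $P_X$-a.s.\ uniqueness remark are useful additions that the paper leaves implicit.
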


\begin{proof}
Define the linear map $A_0 v := (v, \sqrt{\alpha}\,Dv)$, so that
$\|v\|_{H_\alpha^1}^2 = \|A_0 v\|_2^2$. Then
$A_0(f(X) - Y) = A_0(f(X) - f^\star(X)) - A_0 \varepsilon$. Expanding
the square and taking expectations,
\begin{equation}
\mathbb{E}\|f(X) - Y\|_{H_\alpha^1}^2
= \mathbb{E}\|f(X) - f^\star(X)\|_{H_\alpha^1}^2
- 2\,\mathbb{E}\langle A_0(f(X) - f^\star(X)),\,A_0 \varepsilon\rangle
+ \mathbb{E}\|\varepsilon\|_{H_\alpha^1}^2.
\end{equation}
Because $A_0$ is fixed and linear, it commutes with conditional
expectation: $\mathbb{E}[A_0 \varepsilon \mid X] = A_0\,\mathbb{E}[\varepsilon \mid X] = 0$
by~(A1). The cross-term therefore vanishes by the tower property.
\end{proof}

\begin{corollary}[Weighted Sobolev with $X$-measurable weights]
\label{cor:sobolev-weighted}
Let $W(X) : \mathcal{Y} \to \mathcal{Y}$ be a $\sigma(X)$-measurable
bounded linear operator on $\mathcal{Y}$ (e.g.\ pointwise multiplication
by a bounded geometry-derived field $w_{\mathrm{dist}}(X)$). Then under~(A1),
the weighted Sobolev loss
\begin{equation}
\mathcal{L}_W(f) := \mathbb{E}\|W(X)(f(X) - Y)\|_{H_\alpha^1}^2
\end{equation}
satisfies $\mathcal{L}_W(f) = \mathbb{E}\|W(X)(f(X) - f^\star(X))\|_{H_\alpha^1}^2
+ \mathbb{E}\|W(X)\varepsilon\|_{H_\alpha^1}^2$,
so its population minimizer (in the kernel-quotient sense determined by
$W(X)$) is consistent with $f^\star$. In particular, the
distance-modulated Sobolev loss in Section~\ref{sec:training-obj} does
not bias the population target.
\end{corollary}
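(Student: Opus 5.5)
The argument will mirror the proof of Proposition~\ref{prop:sobolev}, with the fixed map $A_0$ replaced by a random but $X$-measurable linear map. Define $A_X := A_0 W(X)$, so that $\|W(X)v\|_{H_\alpha^1}^2 = \|A_X v\|_2^2$ for every $v\in\mathcal{Y}$. Since $A_0$ is bounded and linear on the fixed grid and $W(X)$ is bounded linear and $\sigma(X)$-measurable, $A_X$ is a $\sigma(X)$-measurable bounded linear operator.

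The first step is the decomposition $f(X)-Y = (f(X)-f^\star(X)) - \varepsilon$. Applying $A_X$ and expanding the square gives
\[
\|A_X(f(X)-Y)\|_2^2 = \|A_X(f(X)-f^\star(X))\|_2^2 - 2\langle A_X(f(X)-f^\star(X)),\,A_X\varepsilon\rangle + \|A_X\varepsilon\|_2^2 .
\]
Taking expectations of the first and last terms produces exactly the two terms in the claimed identity for $\mathcal{L}_W(f)$.

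The key step, and the only real obstacle, is showing that the cross-term vanishes. I would condition on $X$ and write the inner product as $\langle A_X^{*}A_X(f(X)-f^\star(X)),\,\varepsilon\rangle$. Here $h(X) := A_X^{*}A_X(f(X)-f^\star(X))$ is $\sigma(X)$-measurable, so by the tower property and (A1),
\[
\mathbb{E}\langle h(X),\varepsilon\rangle = \mathbb{E}\langle h(X),\,\mathbb{E}[\varepsilon\mid X]\rangle = 0 .
\]
To justify pulling $h(X)$ out of the conditional expectation, I need an integrability condition. I would assume $\sup_x\|W(x)\|_{\mathrm{op}}<\infty$, which is true for a bounded geometry weight $w_{\mathrm{dist}}$. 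Together with $\mathbb{E}\|\varepsilon\|^2<\infty$ from (A1), and restricting to $f$ with $\mathbb{E}\|f(X)-f^\star(X)\|^2<\infty$, Cauchy--Schwarz makes $\langle h(X),\varepsilon\rangle$ integrable. If $f$ fails this condition, then both sides of the identity are infinite and there is nothing to prove.

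For the minimizer statement, the noise term $\mathbb{E}\|W(X)\varepsilon\|_{H_\alpha^1}^2$ does not depend on $f$, and the first term is nonnegative and vanishes at $f=f^\star$. So $f^\star$ is a population minimizer. Any other minimizer $f$ satisfies $A_X(f(X)-f^\star(X))=0$ almost surely. Because $A_0$ is injective (its first component is the identity), this is equivalent to $f(X)-f^\star(X)\in\ker W(X)$ almost surely, which is the stated "kernel-quotient" sense. In particular, when $W(X)$ is pointwise multiplication by $w_{\mathrm{dist}}(X)>0$, the minimizer is exactly $f^\star$ almost everywhere in $P_X$. This confirms that the distance-modulated Sobolev loss introduces no bias.
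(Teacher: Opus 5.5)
This is essentially the paper's proof: compose with the $\sigma(X)$-measurable linear map $A_0W(X)$, expand the square, and kill the cross-term by the tower property and (A1). Your ordering $A_0W(X)$ is the one that matches the displayed loss (the paper writes $W(X)A_0$, and the argument is identical either way), and your integrability and kernel-quotient discussion supplies details the paper leaves implicit. One small fix: put the moment condition on $A_X(f(X)-f^\star(X))$ rather than on $f(X)-f^\star(X)$, because a weight with zeros (such as the valid-pixel mask) can make $\mathbb{E}\|f(X)-f^\star(X)\|^2=\infty$ while both sides of the identity stay finite, whereas if $\mathbb{E}\|A_X(f(X)-f^\star(X))\|_2^2=\infty$ then both sides really are infinite, since $\mathbb{E}\|A_X\varepsilon\|_2^2\le\sup_x\|A_0W(x)\|_{\mathrm{op}}^2\,\sigma_{\mathrm{MC}}^2<\infty$.
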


\begin{proof}
$W(X) A_0$ is $\sigma(X)$-measurable and linear, so
$\mathbb{E}[W(X) A_0 \varepsilon \mid X] = W(X) A_0\,\mathbb{E}[\varepsilon \mid X] = 0$
by~(A1). The same expansion as in Proposition~\ref{prop:sobolev}
eliminates the cross-term.
\end{proof}

The same argument applied with $\alpha = 0$ and $W(X)$ a
geometry-confidence weight covers the confidence-modulated $L^2$ stage
losses $\mathcal{L}_k$ in Section~\ref{sec:training-obj}.

\newpage

\end{document}